\DeclareMathAlphabet{\mathcal}{OMS}{cmsy}{m}{n}
\newcommand{\bfe}[1]{\begin{bfseries}\emph{#1}\end{bfseries}\index{#1}}
\newcommand{\lra}{\mbox{$\:\leftrightarrow\:$}}
\newcommand{\sse}{\mbox{$\:\subseteq\:$}}
\newcommand{\LL}{\mbox{$\ldots$}}
\newcommand{\NI}{\noindent}
\newcommand{\HB}{\hfill{$\Box$}}
\newcommand{\II}{\vspace{2 mm}}
\newcommand{\szkew}[1]{\relax \setbox0=\hbox{\kern -24pt $\displaystyle#1$\kern 0pt }%
\box0}
{\catcode`\@=11 \global\let\ifjusthvtest@=\iffalse}
\newcounter{oldmycaption}
\newcommand{\weg}[1]{}
\newcommand{\lang}{\mathcal{L}}
\newcommand{\Agents}{\mathsf{A}}
\newcommand{\Atomsb}{\mathsf{Q}}
\newcommand{\lprop}{\mathcal{L}_0}
\newcommand{\lwn}{\mathcal{L}_1}
\newcommand{\la}{\mathcal{L}^a_1}
\newcommand{\NPlog}{\text{P}^{\text{NP}}_{\parallel}}
\newcommand{\NP}{{\sc NP}\xspace}
\newcommand{\coNP}{{\sc co-NP}\xspace}
\newtheorem{theorem}{Theorem}
\newtheorem{lemma}[theorem]{Lemma}
\newtheorem{definition}{Definition}
\newtheorem{corollary}[theorem]{Corollary}
\newtheorem{example}{Example}
\newtheorem{problem}{Problem}
\newtheorem{claim}{Claim}
\newcommand{\qed}{\hfill$\Box$}
\newenvironment{proof}{\NI {\em Proof.}}{\qed}
\newcommand{\set}[1]{{\{ #1 \}}}
\newcommand{\Calls}{\mathsf{C}}
\newcommand{\Situation}{\mathsf{s}}
\newcommand{\init}{\mathsf{root}}
\newcommand{\Call}{\mathsf{c}}
\newcommand{\Calld}{\mathsf{d}}
\newcommand{\CSequences}{\boldsymbol{\Calls}}
\newcommand{\CSequence}{\boldsymbol{\Call}}
\newcommand{\CSequenced}{\boldsymbol{\Calld}}
\title{Open Problems in a Logic of Gossips}
\author{Krzysztof R. Apt
	\institute{Centrum Wiskunde \& Informatica\\ Amsterdam, The Netherlands}
	\institute{University of Warsaw\\ Warsaw, Poland}
    \email{k.r.apt@cwi.nl}
	\and
	Dominik Wojtczak
	\institute{University of Liverpool \\
		Liverpool, UK}
	\email{d.wojtczak@liv.ac.uk}
}
\begin{document}

\maketitle

\begin{abstract}
  Gossip protocols are programs used in a setting in which each agent
  holds a secret and the aim is to reach a situation in which all
  agents know all secrets. Such protocols rely on a point-to-point or
  group communication.  Distributed epistemic gossip protocols use
  epistemic formulas in the component programs for the agents. The
  advantage of the use of epistemic logic is that the resulting
  protocols are very concise and amenable for a simple verification.

  Recently, we introduced a natural modal logic that allows one to
  express distributed epistemic gossip protocols and to reason about
  their correctness. We proved that the resulting protocols are
  implementable and that all aspects of their correctness, including
  termination, are decidable. To establish these results we showed
  that both the definition of semantics and of truth of the underlying
  logic are decidable. We also showed that the analogous results hold
  for an extension of this logic with the `common knowledge' operator.

  However, several, often deceptively simple, questions about this
  logic and the corresponding gossip protocols remain open. The
  purpose of this paper is to list and elucidate these questions and
  provide for them an appropriate background information in the form
  of partial of related results.
\end{abstract}
\date{}

\section{Introduction}

Gossip protocols concern a set up in which each agent holds initially
a secret and the aim it to arrive, by means of point-to-point or group
communications (called \emph{calls}), at a situation in which all agents know
each other secrets.  During the calls the agents exchange some,
possibly all, secrets they know.

These protocols were successfully used in a number of
domains, for instance communication networks \cite{HHL88}, computation
of aggregate information \cite{kempe2003gossip}, and data replication
\cite{ladin1992providing}.  For a more recent account see
\cite{HKPRU05} and \cite{KvS07}.

In \cite{ADGH14} a dynamic epistemic logic was introduced in which
gossip protocols could be expressed as formulas. These protocols rely
on agents' knowledge and are distributed, so they are distributed
epistemic gossip protocols.  This means that they can be
seen as special cases of knowledge-based programs introduced in
\cite{FHMV97}.

In \cite{AGH16} a simpler modal logic was introduced that is
sufficient to define these protocols and to reason about their
correctness. This logic is interesting in its own rights and was
subsequently studied in a number of papers. In particular, in
\cite{AW16}, and in the full version in \cite{AW18}, we established
decidability of its semantics and truth for a limited
fragment. Building upon these results we then proved that the
distributed gossip protocols, the guards of which are defined in this
logic, are implementable, that their partial correctness is decidable,
and that termination and two forms of fair termination of these
protocols are decidable, as well.  Further, in \cite{AKW17} the
computational complexity of this fragment was studied and in
\cite{AW17a} we considered its extension with the common knowledge
operator for which we established analogous decidability results.
  
In spite of the simplicity of this modal logic several natural
questions about it and the gossip protocols defined using this logic
remain open. In what follows we discuss these problems. For each
of them we provide the relevant background information and establish
some partial results.

Among these partial results let us mention the following ones:

\begin{itemize}
\item When the agents form a star graph, each correct distributed
  epistemic gossip protocol in the framework of \cite{AGH16} has to
  rely on guards with the modal operators (Theorem \ref{thm:star} in
  Section \ref{sec:gp1}).  This is relevant since the complexity of
  determining truth of guards is higher in presence of modal operators
  (see Section \ref{sec:open}).

\item It is well known (see, e.g., \cite{tijdeman:1971} discussed in
  Section \ref{sec:gp1}) that for 4 agents 4 calls are both needed and
  sufficient to reach a situation in which all agents know all
  secrets. The resulting protocol is centralized.  We show that when
  distributed epistemic gossip protocols are used, for 4 agents 5
  calls are both needed and sufficient (Theorems \ref{thm:priv3} and
  \ref{thm:2n-3} in Section \ref{sec:gp1}).

\item In the literature on distributed computing, e.g., on Calculus of
  Communicating systems (CCS) of \cite{Mil80}, there is a wealth of
  literature on various ways of comparing behaviour of two processes
  (see, e.g., \cite{San09} for an extensive overview of the
  fundamental concept of bisimulation).  Distributed epistemic gossip
  protocols can be naturally compared by means of a notion that one of
  them can simulate another. We show that checking it can be done in
  exponential time (Theorem \ref{thm:simulate} in Section
  \ref{sec:gp2}).

\end{itemize}
Further, the arguments used to prove the last result imply that all 
computations of a terminating gossip protocol are of length 
$< n^4$.

The paper is organized as follows.  In the next section we discuss related
work. Then, in Section \ref{sec:logic-recall}, introduce
the already mentioned logic, originally defined in \cite{AGH16}, and in
Section \ref{sec:open} discuss some natural open problems about it. In
Section \ref{sec:ck} we recall an extension of this logic with the
common knowledge operator and introduce two open problems concerning
it.  Next, in Section \ref{sec:distributed}, we recall the distributed
epistemic gossip protocols considered in \cite{AGH16}. Then, in
Sections \ref{sec:gp1} and \ref{sec:gp2} we discuss natural open
problems about these protocols. For several open problems we provide
some partial results.

\section{Related work}

As already mentioned, distributed epistemic gossip protocols were
introduced in \cite{ADGH14}. In \cite{ADGH14a} a tool was presented
that given a high level description of an epistemic protocol in this
setting generates the characteristics of the protocol. In both papers
three types of calls were considered but the ones considered here
(initially studied in \cite{AGH16}) differ from them in that we assume
that agents not participating in the call are not aware of it.  In
\cite{AGH16} also two other modes of communication were considered,
in which only one agent (the caller or the called one) learns new
secrets.  The assumptions about the calls used in \cite{ADGH14} and
\cite{AGH16} were presented in a uniform framework in \cite{AGH18},
where in total 18 types of communication were introduced and compared
w.r.t.~their epistemic strength.

In \cite{HM15} and \cite{herzig_how_2017} centralized gossip protocols
were studied the aim of which is to achieve higher-order shared
knowledge, for example knowledge of level 2 which stipulates that
everybody knows that everybody knows all secrets. In particular, a
protocol was presented and proved correct that achieves in
$(k+1)(n-2)$ steps shared knowledge of level $k$.  These matters were
further investigated in \cite{cooper_simple_2016}, where optimal
protocols for various versions of such a generalized gossip problem
were presented. These protocols depend on various parameters, for
example type of the underlying graph or the type of
communication. Further, different gossip problems were also studied in
which some negative goals, for example that certain agents must not
know certain secrets, are supposed to be achieved.  In
\cite{cooper2018temporal} such problem were studied further in the
presence of temporal constraints, i.e., a given call can only (or has
to) be made within a given time interval.

Then in \cite{CHMMR16} gossip protocols were analyzed as an instance
of multi-agent epistemic planning that was subsequently translated
into a planning language.

The underlying framework was analyzed from a number of views. In
\cite{van_ditmarsch_parameters_2016} gossip problems were considered
in an epistemic framework that provides several parameters allowing us
to capture various aspects of it, for example the initial knowledge of
the agents, the type of communication used, and the desired type of
the protocol (for example, a symmetric one). For some of the
combinations of the parameters the minimum number of calls needed to
reach the final situation was established.  The expected time of
termination of several gossip protocols for complete graphs was
studied by \cite{DKS17}.

Next, in \cite{DEPRS17} dynamic distributed gossip protocols were
studied in which the calls allow the agents not only to share the
secrets but also to transmit the links. These protocols were
characterized in terms of the class of graphs for which they
terminate. They differ from the ones here considered, which are
static.  This set up was further investigated in \cite{DEPRS18} where
various dynamic gossip protocols were proposed and analyzed.  In
\cite{gattinger2018towards} such protocols were analyzed by embedding
them in a network programming language Net\textsc{KAT} proposed in
\cite{AFGJKSW14}.

\section{Logic: a recall}
\label{sec:logic-recall}

We recall here the framework of \cite{AGH16}.  We assume a fixed set
$\Agents$ of $n \geq 3$ \bfe{agents} and stipulate that each agent
holds exactly one \bfe{secret}, and that there exists a bijection
between the set of agents and the set of secrets. We use it implicitly
by denoting the secret of agent $a$ by $A$, of agent $b$ by $B$, etc.
We denote by $\mathsf{Sec}$ the set of all secrets.

The language $\lang$ is defined by the following grammar:
\[
\phi ::= F_a S \mid \neg \phi \mid \phi \land \phi \mid K_a \phi,
\]
where $S \in \mathsf{Sec}$ and $a \in \Agents$.

So $F_a S$ is an atomic formula, while $K_a \phi$ is a compound
formula. We read $F_a S$ as `agent $a$ is familiar with the secret
$S$' (or `agent $a$ holds secret $S$') and $K_a \phi$ as `agent $a$
knows the formula $\phi$'.  Below we shall freely use other Boolean
connectives that can be defined using $\neg$ and $\land$ in a standard
way.

\newcommand{\Exp}{\text{Exp}}
In the sequel we shall use the following formula
\[
  \Exp_a \equiv \bigwedge_{S \in \mathsf{Sec}} F_a S,
\]
that denotes the fact that agent $a$ is familiar with all the secrets
(is an `expert').

In the paper we shall use the following sublanguages of $\lang$:

\begin{itemize}
\item $\lprop$, its propositional part, which consists of the formulas
that do not use the $K_a$ modalities; 

\item $\lwn$,  which consists of the formulas
without the nested use of the $K_a$ modalities;

\item $\la$, where $a \in \Agents$ is a fixed agent,
which consists of the formulas from $\lwn$ where the only modality is $K_a$.

\end{itemize}

Each \bfe{call}, written as $ab$, concerns two different agents, the \bfe{caller},
$a$, and the \bfe{callee}, $b$.  After the call the caller and the
callee learn each others secrets.  Calls are denoted by $\Call$,
$\Calld$.  Abusing notation we write $a \in \Call$ to denote that
agent $a$ is one of the two agents involved in the call $\Call$.
 
Following \cite{AGH16} we stipulate that agents not involved in the
call are not aware of it.  This will be addressed in Definition
\ref{def:model} below.

In what follows we focus on call sequences. Unless explicitly stated
each call sequence is assumed to be finite.  The empty sequence is
denoted by $\epsilon$.  We use $\CSequence$ to denote a call sequence
and $\CSequences$ to denote the set of all finite call sequences.
Given call sequences $\CSequence$ and $\CSequenced$ and a call $\Call$
we denote by $\CSequence.\Call$ the outcome of adding $\Call$ at the
end of the sequence $\CSequence$ and by $\CSequence.\CSequenced$ the
outcome of appending the sequences $\CSequence$ and $\CSequenced$.  We
say that $\CSequence_2$ is a \bfe{subsequence} of a call sequence
$\CSequence$ if for some call sequences $\CSequence_1$ and
$\CSequence_3$ we have $\CSequence = \CSequence_1.\CSequence_2.\CSequence_3$.

To describe what secrets the agents are familiar with, we use the
concept of a \bfe{gossip situation}. It is a sequence
$\Situation = (\Atomsb_a)_{a \in \Agents}$, where
$\{A\} \sse \Atomsb_a \sse \mathsf{Sec}$ for each agent $a$.  Intuitively, $\Atomsb_a$
is the set of secrets $a$ is familiar with in the gossip situation
$\Situation$.  The \bfe{initial gossip situation} is the one in which
each $\Atomsb_a$ equals ${\{A\}}$ and is denoted by $\init$.  It
reflects the fact that initially each agent $a$ is familiar only with his
own secret, $A$.  We say that an agent $a$ is an \bfe{expert} in a gossip
situation $\Situation$ if $a$ is familiar in $\Situation$ with all the
secrets, i.e., if $\Atomsb_a = \mathsf{Sec}$.

Each call transforms the current gossip situation 
by modifying the sets of secrets the agents involved in the call are
familiar with as follows.  Consider a gossip situation
$\Situation := (\Atomsb_d)_{d \in \Agents}$ and a call $ab$.

Then
\[
  ab(\Situation) := (\Atomsb'_d)_{d \in \Agents},
\]
where $\Atomsb'_a = \Atomsb'_b = \Atomsb_a \cup \Atomsb_b$, and for
$c \not \in \{a,b\}$, $\Atomsb'_c = \Atomsb_c$.

So the effect of a call is that the caller and the callee share the secrets
they are familiar with.

The result of applying a call sequence to a gossip situation $\Situation$ is
defined inductively as follows:
\[
\epsilon(\Situation) := \Situation, \
(\Call.\CSequence)(\Situation) := \CSequence(\Call(\Situation)).
\]

\begin{example} \label{exa:1}
\rm
We will use the following concise notation for gossip situations. Sets
of secrets will be written down as lists. E.g., the set
$\set{A, B, C}$ will be written as $ABC$. Gossip situations will be
written down as lists of lists of secrets separated by dots. E.g., if
there are three agents, $a$, $b$ and $c$, then $\init = A.B.C$ and the
gossip situation $(\set{A,B}, \set{A,B}, \set{C}$) will be written as
$AB.AB.C$.

Let $\Agents = \set{a,b,c}$. 
Consider the call sequence $(ac,bc,ac)$. It generates
the following successive gossip situations starting from $\init$:
\[
A.B.C \stackrel{ac}{\longrightarrow} AC.B.AC \stackrel{bc}{\longrightarrow} AC.ABC.ABC
\stackrel{ac}{\longrightarrow} ABC.ABC.ABC.
\]
Hence $(ac,bc, ac)(\init) = (ABC.ABC.ABC)$.
\HB
\end{example}

As calls progress in sequence from the initial situation, agents may
be uncertain about which call sequence took place.  This uncertainty
is captured by the appropriate equivalence relations on the call
sequences.  

\begin{definition} \label{def:model}
  Fix an agent $a$.  We define
  $\sim_a \subseteq \CSequences \times \CSequences$ as the smallest
  equivalence relation satisfying the following conditions:

\begin{description}
\item{{\em [Base]}} $\epsilon \sim_a \epsilon$,

\item{{\em [Step]}} Suppose $\CSequence \sim_a \CSequenced$.

  \begin{enumerate}[(i)]

  \item If $a \not\in \Call$, then $\CSequence.\Call \sim_a \CSequenced$
and $\CSequence \sim_a \CSequenced.\Call$.

\item If there exists $b \in \Agents$ and $\Call \in \set{ab, ba}$ 
such that  $\CSequence.\Call(\init)_a = \CSequenced.\Call(\init)_a$, then
$\CSequence.\Call \sim_a \CSequenced.\Call$.

  \end{enumerate}
\end{description}
\end{definition}

In \emph{(i)} we formalize the assumption that the agents are not
aware of the calls they do not participate in.  In turn, in
\emph{(ii)} we capture the intuition that two call sequences are
indistinguishable for an agent
if the sets of his calls in both sequences are the same
and in each sequence he observes the same set of secrets.
For instance, by \emph{(i)} we have $ab, bc \sim_a ab, bd$.
But we do not have $bc, ab \sim_a bd, ab$ since
$C \in (bc, ab)(\init)_a$, while $C \not\in (bd, ab)(\init)_a$.

Next, we recall the definition of truth.

\begin{definition} \label{def:semantics}%
  Consider 
  a call sequence $\CSequence \in \CSequences$. We define the
  satisfaction relation $\models$ inductively as follows (clauses for
  Boolean connectives are as usual and omitted):
\begin{eqnarray*}
\CSequence \models F_a S & \mbox{iff} & S \in \CSequence(\init)_a, \\
\CSequence \models K_a \phi &  \mbox{iff}  & \,\forall \CSequenced \mbox{     s.t.     } \CSequence \sim_a \CSequenced, ~\CSequenced \models \phi. 
\end{eqnarray*}
Further, we say that $\phi$ is \textbf{true}, and write
$\models \phi$,  when $\forall \CSequence \ \CSequence \models \phi$.
Also, we say that $\phi$ and $\psi$ are \textbf{equivalent} if
$\forall \CSequence \ \CSequence \models \phi \lra \psi$.
\end{definition}

So a formula $F_a S$ is true after the call sequence $\CSequence$
whenever secret $S$ belongs to the set of secrets agent $a$ is
familiar with in the situation generated by the call sequence
$\CSequence$ applied to the initial situation $\init$.
Hence $\CSequence \models Exp_a$ iff agent $a$ is an expert in 
$\CSequence(\init)$.

The knowledge operator is interpreted as customary in epistemic logic,
using the equivalence relations $\sim_a$.

\section{Open problems about the logic}
\label{sec:open}

The first problem concerns the propositional part $\lprop$ of the
language $\lang$.  In \cite{AKW17} we established that the problem of
determining the complexity of the semantics of the language of $\lprop$
(i.e, determining whether $\CSequence \models \phi$ for a given call
sequence $\CSequence$) is in P, while the problem of determining truth
in the language of $\lprop$ (i.e, determining whether
$\CSequence \models \phi$ for all call sequences $\CSequence$) is
\coNP-complete.  Note that the latter implies that checking whether
there exists a call sequence $\CSequence$ such that
$\CSequence \models \phi$ is \NP-complete.

\begin{problem} \label{pro:1}
  Find an axiomatization of $\lprop$.
\end{problem}

\NI
\textbf{Comments}

We stipulate that the following formula should be an axiom:
\begin{equation}
  \label{equ:2}
  \bigwedge_{\substack{a,b \in \Agents \\ a \neq b}} (F_a B \to \bigvee_{k = 2}^{n} \bigvee_{\substack{a_1, \ldots,
      a_k \in \Agents \\ a_1 = b, a_k = a \\ a_i \neq a_j \text{ for all } i \neq j}} \bigwedge_{h = 1}^{k-1} (F_{a_h} A_{h+1} \land F_{a_{h+1}} A_{h})).
\end{equation}

It formalizes the fact that communication is bidirectional and that
agents learn secrets through a chain of calls, and is easily seen to be
true.  Here and elsewhere we make use of the fact that there is a
bijection between secrets and agents, which allows us to use agents
(here $b, a_h$ and $a_{h+1}$) when referring to the corresponding
secrets (here $B, A_h$ and $A_{h+1}$).  Note that the second
disjunction is finite, since we postulate that the agents
$a_1, \LL, a_k$ are pairwise different.

Here is the version of (\ref{equ:2}) when $\Agents$ has three agents that is more readable:
\[
  \bigwedge_{\substack{a, b, c \in \Agents \\ \{a, b, c\} = \Agents}} (F_a B \to (F_b A \land F_a B) \lor (F_b C \land F_c B \land F_c A \land F_a C)).
\]

Assuming a sound and complete proof system for Boolean formulas formula
(\ref{equ:2}) implies the following natural formula of $\lprop$
stating that each agent can learn a new secret only by revealing its
own:

\begin{equation}
  \label{equ:1}
\bigwedge_{\substack{a,b \in \Agents \\ a \neq b}} (F_a B \to \bigvee_{\substack{c,a \in \Agents \\ c \neq a}} F_c A).
\end{equation}

To see the claim it suffices to note that (\ref{equ:2}) implies 
\[
  \bigwedge_{\substack{a,b \in \Agents \\ a \neq b}} (F_a B \to \bigvee_{k = 2}^{n} \bigvee_{\substack{a_1, \ldots,
          a_k \in \Agents \\ a_1 = b, a_k = a \\ a_i \neq a_j \text{ for all } i \neq j}} F_{a_{k-1}} A),
\]
from which (\ref{equ:1}) follows.

In \cite{AGH18} it was observed that the following formula is true:
\begin{equation}
  \label{equ:3}
  \bigwedge_{\substack{a,b \in \Agents \\ a \neq b}} \Big(F_a B \land \bigwedge_{\substack{i \in \Agents \\ i \neq a, i \neq b}} \neg F_i B\Big) \to F_b A.
\end{equation}

It states that if agent $a$ is the only agent (different from $b$) familiar
with the secret of $b$, then agent $b$ is familiar with the secret of $a$.

It is easy to see that
assuming a sound and complete proof system for Boolean formulas
(\ref{equ:2}) implies (\ref{equ:3}). To this end it suffices to note that
(\ref{equ:2}) implies
\[
  \bigwedge_{\substack{a,b \in \Agents \\ a \neq b}} (F_a B \to F_b A \lor \bigvee_{k =
    3}^{n} \bigvee_{\substack{a_1, \ldots,
        a_k \in \Agents \\ a_1 = b, a_k = a \\ a_i \neq a_j \text{ for all } i \neq j}} F_{a_{2}} B),
\]
from which (\ref{equ:3}) follows.

A more ambitious problem is the following one.

\begin{problem} \label{pro:2}
  Find an axiomatization of $\lang$.
  Determine the complexity of the semantics and of truth in the language
of $\lang$.
\end{problem}

\NI
\textbf{Comments}

In \cite{AW16} and in the full version in \cite{AW18}, we showed that
for the sublanguage $\lwn$ of $\lang$ both the semantics and truth are
decidable.  In \cite{AKW17} we sharpened these results by showing that
the first problem is $\NPlog$-complete, while the complexity of
determining the truth is in $\text{coNP}^{\text{NP}}$. 
It is not clear how to extend these results to larger fragments of $\lang$.
Recall that the complexity class $\NPlog$, defined in \cite{wagner1987more},
corresponds
to the class of problems solvable by 
a deterministic polynomial-time Turing machine that has a parallel 
access to an NP oracle (i.e., no query to this oracle can depend on the outcome of any other). 
In
\cite{wagner1987more,spakowski2000theta} many natural problems were shown to be complete for this class.
One of them is checking for two Boolean formulas $\phi,\phi'$ 
whether the maximum number of variables assigned {\bf true} in a satisfying assignment
is greater for $\phi$ than for $\phi'$.
On the other hand, the complexity class ${\text{coNP}}^{\text{NP}}$, commonly denoted by $\Pi^P_2$,
corresponds to a non-deterministic polynomial-time
Turing machines with an access to an \NP oracle that accepts a given
input if and only if all its non-deterministic branches accept it. An example
problem complete for this class is the satisfiability for quantified
Boolean formulas with two alternations of quantifiers
$\forall {x_1,\ldots,x_k} \exists {y_1,\ldots,y_l} \phi$, where $\phi$
is a Boolean formula over the variables $x_1,\ldots,x_k,y_1,\ldots,y_l$.

\section{Open problems concerning common knowledge}
\label{sec:ck}

In \cite{AW17a} an extension of the language $\lang$ was considered that involves
the common knowledge operator.
The resulting language $\lang_{ck}$ is
defined by the following grammar:
\[
\phi ::= F_a S \mid \neg \phi \mid \phi \land \phi \mid C_G \phi,
\]
where $S \in \mathsf{Sec}$ and $a \in \Agents$ and $G \sse \Agents$.

Recall that the semantics of the $C_G$ operator is defined as follows.
Given a set $A$ let $A^*$ be the set of all finite sequences formed
from the elements of $A$.  For $t = a_1, \LL, a_k$ let
$K_t = K_{a_1} \LL K_{a_k}$.  Then we stipulate that
\[
  C_G \phi \equiv \bigwedge_{t \in G^*} K_t \phi.
\]
Note that the formula on the right-hand side is an infinite conjunction.  

When $G$ is a singleton, say $G = \{a\}$, the formula $C_G \phi$ has
the same semantics as $K_a \phi$, since $K_a K_a \phi$ is equivalent
to $K_a \phi$.  So the language $\lang_{ck}$ can be viewed as an
extension of $\lang$.

\begin{problem} \label{pro:ck}
  Determine whether common knowledge is equivalent to a nested knowledge.
  More precisely, determine whether for every $G$ and $\phi$ there exists $t \in G^*$ such
  that the formulas $C_G \phi$ and $K_t \phi$ are equivalent.
\end{problem}

If the answer to Problem \ref{pro:ck} is positive, then it is natural
to ask whether $t \in G^*$ can be found independently of the formula
$\phi$, that is, whether for some $t \in G^*$ the equivalence
\[
  C_{G} \phi \lra K_t \phi
\]
holds for all formulas $\phi$ in $\lang_{ck}$ or in some fragment of it.

\begin{problem} \label{pro:4}
  Find an axiomatization of $\lang_{ck}$.
  Determine the complexity of the semantics and of truth in the language
of $\lang_{ck}$.
\end{problem}

\NI
\textbf{Comments}

We do not know the answer to Problem \ref{pro:ck} even for the formulas of the form
$C_{G} \phi$, where $\phi \in \lprop$. In \cite{AW17} we succeeded to establish
the following limited results.
\II

\NI
{\scshape Theorem}

\begin{enumerate}[(i)]
\item
\emph{Suppose that $|G| \geq 3$. Then for all call sequences $\CSequence$ and formulas $\phi \in \lang_{ck}$
\[
\CSequence \models C_G \phi \mbox{ iff } \models \phi.
\]
Consequently, the formulas $C_G \phi$ and $\phi$ are equivalent.
}
\item
  \emph{For all call sequences $\CSequence$ that do not
contain the call $ab$ and all formulas $\phi \in \lang_{ck}$ that do
not contain the $\neg$ symbol
\[
\CSequence \models C_{\{a,b\}} \phi \mbox{ iff } \CSequence \models K_{abab} \phi.
\]
}
\end{enumerate}

Part (i) states that the formulas commonly known by a group of at
least three agents are precisely the true formulas. Part (ii) states
that for a group of two agents common knowledge of negation-free formulas 
is equivalent to the 4th fold iterated knowledge w.r.t.~a call
sequence in which no calls between these two agents were made.
Examples in \cite{AW17} show that this claim does not hold for arbitrary formulas and
that the restriction on the call sequence cannot be dropped.

In \cite{AW17} we showed that both the semantics and truth in the
sublanguage of $\lang_{ck}$ that consists of the formulas with no nested
$C_G$ modalities are decidable.

\section{Distributed epistemic gossip protocols: a recall}
\label{sec:distributed}

In \cite{AGH16}, as a follow up on \cite{ADGH14}, we studied
distributed epistemic gossip protocols.  Their goal is to reach a
gossip situation in which each agent is an expert.  In other words,
their goal is to transform a gossip situation in which the formula
$\bigwedge_{a \in \Agents} (F_a A \land \bigwedge_{b \in
  \Agents, b \neq a} \neg F_a B)$ is true into one in which the
formula $\bigwedge_{a, b \in \Agents} F_a B$ is true. As explained
in the introduction, in \cite{AGH16} a different syntax of the gossip
protocols than in \cite{ADGH14} was used.

Let us recall the definition.  The adopted syntax follows the syntax
of the CSP language (Communicating Sequential Processes) of
\cite{Hoa78}, in which '$*$' denotes a repetition and `$[]$' a
nondeterministic choice.  By a \bfe{component program}, in short a
\bfe{program}, for an agent $a$ we mean a statement of the form
\[ 
*[[]^m_{j=1}\ \psi_j \to \Call_j],
\]
where $m \geq 0$ and each $\psi_j \to \Call_j$ is such that

\begin{itemize}
\item $a$ is the caller in the call $\Call_j$,

\item $\psi_j \in \la$ and all atomic formulas used in $\psi$ start with $F_a$. 

 \end{itemize}
If $m = 0$, the program is empty.

We call each such
construct $\psi \to \Call$ a \bfe{rule} and refer in this context to
$\psi$ as a \bfe{guard}.
Intuitively, $*$ denotes a repeated execution of the rules, one at a time, where
each time non-deterministically a rule is selected whose guard is true.

By a \bfe{distributed epistemic gossip protocol}, from now on just a
\bfe{gossip protocol}, we mean a parallel composition of component
programs, one for each agent.  We call a gossip protocol
\bfe{propositional} if all guards in it are propositional, i.e., are
from the language $\lang_{p}$.

We presuppose that in each gossip protocol the agents are the nodes of
a directed graph (digraph) and that each call $ab$ is allowed only if $a \to b$
is an edge in the digraph.  A minimal digraph that satisfies this
assumption is uniquely determined by the syntax of the protocol. Given
that the aim of each gossip protocol is that all agents become experts
it is natural to assume that this digraph is connected.

Here are two examples of gossip protocols to which we shall return later.
\begin{example}\label{exa:lns}
  In \cite{ADGH14} the following correct gossip protocol, called
  \emph{Learn New Secrets} (LNS in short), for complete graphs was
  proposed. In the syntax of \cite{AGH16} used here it is
  propositional, as it has the following program for agent $i$:
\[ 
*[[]_{j \in \Agents} \neg F_i J \to ij].
\]
Informally,  agent $i$ calls agent $j$ if agent $i$ is not familiar with $j$'s secret.
\HB
\end{example}

\begin{example}\label{exa:hms}
In \cite{ADGH14} also the following correct gossip protocol, called \emph{Hear My
  Secret} (HMS in short), for complete graphs was proposed. In the
syntax of \cite{AGH16} it has the following program for agent
$i$:

\[
*[[]_{j \in \Agents} \neg K_i F_j I \to ij].
\]
Informally, agent $i$ calls agent $j$ if agent $i$ does not know whether $j$
is familiar with his secret. 
\HB
\end{example}

Consider a gossip protocol $P$ that is a parallel composition of the component programs
$* [[]^{m_a}_{j=1}\ \psi^a_j \to \Call^a_j]$, one for each agent $a \in \Agents$.

The \bfe{computation tree} %
of $P$ is a directed tree
defined inductively as follows.  Its nodes are call sequences and its
root is the empty call sequence $\epsilon$.  Further, if $\CSequence$
is a node and for some rule $\psi^a_j \to \Call^a_j$ we have
$\CSequence \models \psi^a_j$, then $\CSequence.\Call^a_j$
is a node that is a direct descendant of $\CSequence$. Intuitively,
the arc from $\CSequence$ to $\CSequence.\Call^a_j$ records the effect
of the execution of the rule $\psi^a_j \to \Call^a_j$ performed after the call
sequence $\CSequence$ took place.

By a \bfe{computation} of a gossip protocol we mean a maximal rooted
path in its computation tree.  In what follows we identify each computation with
the unique call sequence it generates.
We say that the gossip protocol $P$ is \bfe{partially correct}
if for all leafs $\CSequence$ of the computation tree of $P$
\begin{equation}
  \label{cond:expert}
\CSequence \models \bigwedge_{a \in \Agents, S \in \mathsf{Sec}} F_a S,
\end{equation}
i.e., if each agent is an expert in the gossip situation
$\CSequence(\init)$.

We say furthermore that $P$ \bfe{terminates} if all its computations
are finite and say that $P$ \bfe{is correct} if it is partially correct and
terminates. 

We also consider two variants of termination. To define
them we need a subsidiary notion.  We call a rule \bfe{enabled} after
a call sequence $\CSequence$ if its guard is true after $\CSequence$.
Given a gossip protocol we say that an agent is \bfe{enabled} after a
call sequence $\CSequence$ if one of the rules in its program is
enabled.

We now stipulate that each finite computation is \bfe{rule-fair} and
\bfe{agent-fair}.  An infinite computation is \bfe{rule-fair}
(resp.~\bfe{agent-fair}) if all rules (resp.~agents) that are enabled
after infinitely many prefixes (in short, infinitely often) are
selected infinitely often.  We say that a gossip protocol $P$
\bfe{rule-fairly terminates} (resp.~\bfe{agent-fairly terminates}) if
all its rule-fair (resp.~agent-fair computations) are finite.
Agent-fairness was introduced in \cite{AGH16}, where it was simply
called fairness, while rule-fairness was introduced in \cite{AW17} and
further studied in \cite{AW18}.

\section{Open problems about the gossip protocols}
\label{sec:gp1}

We begin with the following problem.

\begin{problem}
  Characterize the class of graphs for which 
  correct propositional gossip protocols exist.
\end{problem}

Note that the LNS protocol from Example \ref{exa:lns} shows that
this class of graphs include all complete digraphs.
However, as we will show in Theorem \ref{thm:star}, 
star graphs do not belong to this class.
We conjecture that any digraph whose complement of its set of edges
contains a directed cycle does not have this property.

\NI
\textbf{Comments}

For further discussion it is useful to introduce some terminology.  We
say that a gossip protocol is \bfe{for arbitrary connected digraphs}
if each agent $a$ can only call the agents from the set $N_a$ of
its in-neighbours. In such gossip protocols the underlying digraph is
a parameter that can be uncovered from the sets $N_a$.  If the
underlying digraph is supposed to be complete, we say that the gossip
protocol is \bfe{parametric}.  So in both cases we actually deal with
a `parametrized' gossip protocol, which is a template for an infinite
set of gossip protocols.

For example, in \cite{AW17} and \cite{AW18} a correct gossip protocol
with the following program for agent $i$ was considered:
\[
*[[]_{j \in N_i, S \in \mathsf{Sec}} F_i S \wedge \neg K_i F_j S \to ij].
\]
Informally, agent $i$ calls a neighbour $j$ if $i$ is familiar with
some secret (here $S$) and he does not know whether $j$ is familiar
with it. This gossip protocol is for arbitrary connected digraphs.
However, it is not propositional.

In Example \ref{exa:lns} the LNS gossip protocol was introduced. It is
parametric and is both correct and propositional.  However, its
counterpart for arbitrary connected digraphs, so with the program
\[ 
*[[]_{j \in N_i} \neg F_i J \to ij].
\]
for agent $i$, is obviously incorrect.
Indeed, for the graph%
\smallskip

\begin{center}
\begin{tikzpicture}

  \coordinate [label={below: $i$}] (A) at (0, 0);
  \coordinate [label={below:$j$}] (B) at (1, 0);
  \coordinate [label={below:$k$}] (C) at (2, 0);
  \fill[color=black] (A) circle (0.07cm);
  \fill[color=black] (B) circle (0.07cm);
  \fill[color=black] (C) circle (0.07cm);

  \draw [thick] (A) -- (B) -- (C);
\end{tikzpicture}
\end{center}

\NI
it terminates after the calls $ij$, $jk$ with the agent $i$ not being an expert.

In turn, a natural gossip protocol \emph{Exp} for arbitrary connected
digraphs, with the program
\[ 
*[[]_{j \in N_i} \neg \Exp_i \to ij]
\]
for agent $i$, is obviously partially correct but it does not
terminate, as initially a fixed call $ij$, with $j \in N_i$, can be
repeated indefinitely.

In \cite{AW18} we proved that the \emph{Exp} gossip protocol agent-fairly
terminates in the case of rings.   However, this is not the case in
general. Indeed take the graph

\smallskip

\begin{center}
\begin{tikzpicture}

  \coordinate [label={below: $i$}] (A) at (0, 0);
  \coordinate [label={below:$j$}] (B) at (1, 0);
  \coordinate [label={below:$k$}] (C) at (2, 0);
  \coordinate [label={below:$l$}] (D) at (3, 0);
  \fill[color=black] (A) circle (0.07cm);
  \fill[color=black] (B) circle (0.07cm);
  \fill[color=black] (C) circle (0.07cm);
  \fill[color=black] (D) circle (0.07cm);

  \draw [thick] (A) -- (B) -- (C) -- (D);
\end{tikzpicture}
\end{center}

\NI
Then $(ij, ji, kl, lk)^*$ is an infinite agent-fair computation.

On the other hand, the following result holds.  It generalizes the
above result of \cite{AW18} since for the case of rings the program
for each agent has just one guard and consequently the notions of
agent-fairness and rule-fairness coincide.

\begin{theorem}
  The gossip protocol $Exp$ for arbitrary connected digraphs
 rule-fairly terminates.
\end{theorem}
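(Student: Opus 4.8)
The plan is to argue by contradiction: I assume some rule-fair computation $\xi$ of $Exp$ is infinite and derive that every agent must eventually become an expert, which is absurd, since an expert makes no further calls and $\xi$ would then be finite. Two monotonicity observations drive everything. First, knowledge only grows: along any computation each secret set $\CSequence(\init)_a$ is non-decreasing, so once $a$ satisfies $\Exp_a$ it does so forever. Second, as a consequence, once $a$ is an expert its guard $\neg \Exp_a$ is permanently false, so $a$ issues no further calls. Let $D$ be the set of agents that never become experts along $\xi$.

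First I would show $D \neq \emptyset$ and, crucially, that every $d \in D$ calls each of its neighbours infinitely often. Since $\xi$ is infinite and there are finitely many agents, some agent is the caller infinitely often; by the second observation such an agent never becomes an expert, so $D \neq \emptyset$. Conversely, if $d \in D$ then $\neg \Exp_d$ holds after every prefix of $\xi$, so each rule $\neg \Exp_d \to dj$ with $j \in N_d$ is enabled infinitely often; rule-fairness then forces each such rule to be selected infinitely often, i.e.\ the call $dj$ occurs infinitely often.

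Next I would use a stabilisation argument to propagate membership in $D$ across edges. The quantity $\sum_{a \in \Agents} |\CSequence(\init)_a|$ is non-decreasing and bounded, hence eventually constant, so there is a finite prefix of $\xi$ after which no agent learns a new secret; write $U_a$ for the secret set of $a$ from that point on. For $d \in D$ and $j \in N_d$ the call $dj$ occurs after this point, and since after stabilisation $dj$ cannot enlarge either participant's set we must have $U_d = U_j$. As $d$ is never an expert, this common set is not $\mathsf{Sec}$, so $j$ is never an expert either and $j \in D$. Thus $D$ is closed under the neighbour relation; since the underlying graph is connected and $D \neq \emptyset$, we get $D = \Agents$, and the same equalities along edges show that all agents share a single eventual secret set $U \neq \mathsf{Sec}$.

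The contradiction then closes the argument: each agent always knows its own secret, so $U$ contains $A$ for every $a \in \Agents$, whence $U = \mathsf{Sec}$ and every agent is eventually an expert, contradicting $D = \Agents$. I expect the crux to be the middle step: choosing $D$ as the right invariant and combining rule-fairness (to guarantee that a non-expert keeps calling \emph{all} of its neighbours) with knowledge stabilisation (to turn those repeated calls into equality of eventual knowledge), so that ``never an expert'' propagates along every edge. The monotonicity and stabilisation facts are routine; the delicate point is that it is precisely rule-fairness, rather than mere agent-fairness, that licenses the step from one non-expert agent to each of its neighbours. This is exactly why the weaker notion fails on the four-vertex path given earlier, where $(ij,ji,kl,lk)^*$ is agent-fair yet leaves the always-enabled rule $jk$ forever unselected.
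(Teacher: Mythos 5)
Your proof is correct, and while it runs on the same engine as the paper's, it closes the argument by a genuinely different route. The shared engine: rule-fairness forces every never-expert agent to call each of its neighbours infinitely often, and this propagates ``never an expert'' across edges of the connected digraph (both proofs, incidentally, need directed reachability here, since the propagation only runs along callable edges). The paper then localizes: it fixes a non-expert agent $i$, a secret $J$ it never learns, and a directed path $i = i_1, \ldots, i_h = j$; having shown each call $i_g i_{g+1}$ occurs infinitely often, it extracts from $\xi$ an ordered subsequence $i_{h-1}i_h,\, i_{h-2}i_{h-1}, \ldots, i_1 i_2$ (possible precisely because each call recurs infinitely often), which transports $J$ back to $i$ --- contradiction. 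You instead argue globally: the potential $\sum_{a \in \Agents} |\CSequence(\init)_a|$ is monotone and bounded, hence stabilises; any post-stabilisation call $dj$ forces the eventual knowledge sets to satisfy $U_d = U_j$; closure of the never-expert set $D$ under edges plus connectivity makes all $U_a$ equal to a single set $U$ containing every agent's own secret, so $U = \mathsf{Sec}$, contradicting $U \neq \mathsf{Sec}$. Also your propagation mechanism differs: the paper derives ``$i_2$ is not an expert'' directly (an expert partner would make $i_1$ an expert at the next call), whereas you derive it from equality of stabilised sets. What your stabilisation buys is avoidance of the ordered-subsequence extraction, the one mildly delicate combinatorial step in the paper's argument; what the paper's version buys is a concrete picture of how the missing secret would have to travel. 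Both proofs invoke rule-fairness (as opposed to agent-fairness) at exactly the same point, as you correctly emphasize with the four-vertex path example.
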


\begin{proof}
Suppose otherwise. Consider an infinite rule-fair computation
$\xi$. We say that an agent $i$ \emph{becomes an expert in $\xi$} if
for some element $\CSequence$ of $\xi$ we have
$\CSequence \models \Exp_i$.

As some agent $i$ does not become an expert in $\xi$, there is a
secret $J$ that $i$ does not learn in $\xi$. Let
$i = i_1, i_2, \LL, i_h = j$ be a path connecting agent $i$ with $j$.
By rule-fairness the call $i_{1} i_{2}$ takes place infinitely
often. Hence agent $i_{2}$ does not become an expert in $\xi$ and
consequently by rule-fairness the call $i_{2} i_{3}$ takes place
infinitely often. Repeating this argument we conclude that each call
$i_{g} i_{g+1}$, where $g \in \{1, \LL, h-1\}$, takes place infinitely
often in $\xi$. So the call sequence
$i_{h-1} i_{h}, i_{h-2} i_{h-1}, \LL, i_{1} i_{2}$, possibly
interspersed with other calls, exists in $\xi$. After the last call
agent $i$ learns the secret $J$, which yields a contradiction.
\end{proof}

We now show that for a natural class of connected graphs no correct
propositional gossip protocol exists.

\begin{theorem} \label{thm:star}
  Suppose that the agents form a star
  graph, so a graph in which some agent, say $a$, is present in all
  edges. Then no correct propositional gossip protocol exists.
\end{theorem}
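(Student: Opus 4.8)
The plan is to argue by contradiction: assume a correct (i.e.\ terminating and partially correct) propositional gossip protocol $P$ exists on a star whose centre $a$ lies in every edge, and then exhibit a single computation that is either infinite or ends in a deadlock that is not an all-expert situation. Everything hinges on one feature of \emph{propositional} guards: since each guard of an agent $x$ is a Boolean combination of atoms $F_x S$, its truth after a call sequence $\CSequence$ depends only on the secret set $\CSequence(\init)_x$ that $x$ currently holds, and not on $\CSequence$ itself. Moreover, in a star every call involves $a$, so each call equalises the secret sets of $a$ and of the leaf that participates in it.

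First I would prove a \emph{no-stutter} lemma: in any reachable situation in which $a$ and a leaf $c$ hold exactly the same secrets, no rule whose call is $ac$ or $ca$ can be enabled. Such a call would leave the situation unchanged, and because the guard's truth depends only on the (unchanged) secret set of the caller, the same rule would stay enabled and could be fired forever, yielding an infinite computation and contradicting termination. As an immediate corollary, an \emph{expert} agent is silent: if $x$ holds all of $\mathsf{Sec}$ and had an enabled rule $xy$, firing it once would make $y$ an expert too and then turn into a stutter, again contradicting termination.

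Next I would run the following adversarial computation. Partial correctness forces some rule to be enabled at $\init$ (otherwise $\init$ is itself a deadlock that, since $n \geq 3$, is not an all-expert situation); this rule is necessarily a call between $a$ and some leaf $b$, and I fire it first. After this one call $a$ and $b$ both hold exactly $\{A,B\} \subsetneq \mathsf{Sec}$, so $b$ is not yet an expert, and by the no-stutter lemma $b$ is silent in this situation. From here I maintain the invariant $\CSequence(\init)_b \subseteq \CSequence(\init)_a$: it holds now (the two sets are equal) and is preserved by every later call, since a call between $a$ and $b$ re-equalises the two sets while any other call only enlarges $\CSequence(\init)_a$. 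Because every call involving $b$ equalises the sets, the no-stutter lemma keeps $b$ silent after each such call, so $b$ never initiates a call again.

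The key step, and the main obstacle, is to show that $b$ can nevertheless never become an expert. By the invariant, the call that first makes $\CSequence(\init)_b = \mathsf{Sec}$ would force $\CSequence(\init)_a = \mathsf{Sec}$ already just before it, so $a$ would be an expert at that moment; then $a$ is silent and cannot initiate the call, while $b$ is also silent and cannot initiate it either, so the call can never occur. Hence $b$ stays a non-expert throughout this computation, and any maximal extension of it is either infinite (contradicting termination) or reaches a deadlock in which $b$ is not an expert (contradicting partial correctness). The delicate point this resolves is the apparent escape route in which a later centre-initiated call still completes $b$: feeding $B$ to $a$ by firing the very first available call installs the subset invariant, and from then on completing $b$ would require the already-expert, hence silent, centre to act, which it cannot.
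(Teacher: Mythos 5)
Your proof is correct, but it takes a genuinely different route from the paper's. Both arguments rest on the same two facts about propositional protocols: the truth of a guard in agent $x$'s program depends only on $x$'s current set of secrets, and consequently a call between two agents holding identical secret sets can, once enabled, be fired forever, contradicting termination (this is your no-stutter lemma and expert-silence corollary; in the paper the latter appears as Claim~\ref{cla:1}, and the stuttering argument reappears in its final step). The difference lies in how the fatal configuration is reached. The paper takes an \emph{arbitrary} computation of the assumed protocol, looks at the call $\Call'$ after which the center $a$ first becomes an expert and the call $\Call$ (between $a$ and some leaf $b$) immediately preceding it, and then invokes a commutation claim (Claim~\ref{cla:2}): since the intervening calls do not involve $b$, they do not affect $b$'s propositional guards, so $b$'s next call $\Calld$ can be moved forward to sit directly after $\Call$, producing two consecutive calls between $a$ and $b$ and hence a stutter. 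You avoid commutation entirely: you build one adversarial computation from the root, fire the first enabled call (necessarily between $a$ and some leaf $b$), and maintain the invariant that $b$'s secrets remain a subset of $a$'s while $b$ remains permanently silent; then the first call that could complete $b$ would require $a$ to be an expert already, and experts are silent, so $b$ can never be completed in any extension. Your route is more self-contained (no rearrangement of computations is needed) and yields slightly more information: after the very first call of any computation, the leaf involved in it can never become an expert in any continuation, so every continuation either violates termination or ends in a deadlock violating partial correctness. The paper's route, in exchange, isolates a reusable structural fact (Claim~\ref{cla:2}) about commuting calls in propositional protocols, which is of independent interest.
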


\begin{proof}
  We begin by making two simple observations.
  
  \begin{claim} \label{cla:2} Consider a propositional gossip protocol
    $P$.  Suppose that $\CSequence, \Call, \CSequenced, \Calld$ is a
    prefix of a computation of $P$ such that some agent $a$ 
    \begin{itemize}
    \item is
      involved in the call $\Call$,
      
    \item is not involved in any call in
      $\CSequenced$, and
      
    \item is a caller in $\Calld$.
  \end{itemize}
  Then also
    $\CSequence, \Call, \Calld$ is a prefix of a computation of $P$.
  \end{claim}
  \begin{proof}
    Consider the guard $\phi$ associated with the call $\Calld$.  By
    assumption on $P$, $\phi$ is a propositional formula built out of the atomic
    formulas of the form $F_a S$. By assumption on $\CSequenced$ the
    truth of these atomic formulas is not affected by any call in
    $\CSequenced$. So the truth value of $\phi$ before and after the call
    sequence $\CSequenced$ is the same. But $\phi$ is true just after this
    call sequence, so it is also true just before it.
    This shows that $\Calld$ can be performed immediately after $\Call$.
  \end{proof}

  \begin{claim} \label{cla:1} Let $\xi$ be a computation of a
    terminating propositional gossip protocol. If an agent, say $a$,
    becomes an expert after the call $\Call$, then $a$ is not a caller
    in any call that follows $\Call$ in $\xi$.
  \end{claim}
  \begin{proof}
    Suppose otherwise. Let $\Calld$ be such a call in $\xi$ and let
    $\phi$ be the corresponding guard. This call does not affect the
    set of gossips agent $a$ is familiar with since this set as of the
    call $\Call$ equals $\mathsf{Sec}$. By assumption $\phi$ is a
    propositional formula built out of the atomic formulas of the form
    $F_a S$, so after the call $\Calld$ the formula $\phi$ remains true.  Hence
    the call $\Calld$ can be indefinitely repeated, which shows that
    the considered protocol does not terminate.
  \end{proof}
\II

Let now $G$ be the considered star graph with an agent $a$ present in all
edges.  Suppose by contradiction that a correct propositional gossip
protocol $P$ for $G$ exists. Let $\xi$ be a computation of $P$.  Then
agent $a$ is involved in all calls in $\xi$. Let $\Call'$ be the call
after which $a$ became an expert and let $\Call$ be the call that
precedes $\Call'$ in $\xi$.  By assumption the call $\Call$ concerns
agent $a$ and some agent $b$ not involved in the call $\Call'$.  After
the call $\Call'$ agent $b$ is not yet an expert, hence it is involved
in $\xi$ in another call. Let $\Calld$ be the first such call.

So for some call sequences $\CSequence$ and $\CSequenced$, we have
that $\CSequence, \Call, \CSequenced, \Calld$ is a prefix of
$\xi$. Agent $b$ is not involved in any calls in $\CSequenced$ and by
Claim \ref{cla:1} it is also the caller in $\Calld$.  So by Claim
\ref{cla:2} also $\CSequence, \Call, \Calld$ is a prefix of a
computation, say $\chi$, of $P$. Since both $\Call$ and $\Calld$
involve the same pair of agents, after the second call the set of
gossips of agent $b$ does not change.  Hence in $\chi$ the guard
$\phi$ remains true after $\Calld$ and consequently this call can be
indefinitely repeated. So $P$ does not terminate.
\end{proof}

One of the early results, see for instance \cite{tijdeman:1971}, is
that for $n \geq 4$ agents at least $2n-4$ phone calls are needed to
reach a situation in which each agent is an expert. Further, it is
easy and well-known that this final situation can be reached using $2n-4$ calls.
Indeed, assume that the set of agents is
$\{a, b, c, d, i_1, \LL, i_{n-4}\}$, where $n \geq 4$, (if $n = 4$
then there are no $i_j$ agents) and take the following call sequence
\[
  \begin{array}{l}
  (a, i_1), (a, i_2), \LL, (a, i_{n-4}), \\
  (a,b), (c,d), (a,d), (b,c),  \\
  (a, i_1), (a, i_2), \LL, (a, i_{n-4}).
  \end{array}
\]

\begin{problem} \label{pro:bound}
  Prove that the lower bound $2n-4$ cannot be achieved for the gossip
  protocols.  In other words, prove that every correct gossip
  protocol for $n \geq 4$ agents generates computations of length
  $> 2n- 4$.
\end{problem}

\NI
\textbf{Comments}

We show that this problem can be solved for $n=4$.
To prove it we need the following observations.

\begin{lemma} \label{lem:1}
For all agents
$a, b, c$ and all call sequences $\CSequence$ and all formulas $\phi$
\[
\mbox{$\CSequence \models K_a \phi$ iff $\CSequence, bc \models K_a \phi$.}
\]
Consequently, the same equivalence holds for all formulas that are
Boolean combinations of formulas of the form $K_a \phi$, so in
particular for each guard $\psi$ used in a program for agent $a$.
\end{lemma}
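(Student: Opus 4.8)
The plan is to reduce the whole statement to a single structural fact about the relation $\sim_a$: appending a call $bc$ with $a \notin \{b,c\}$ leaves the $\sim_a$-equivalence class of a call sequence unchanged, and therefore leaves the knowledge of $a$ unchanged. Fix agents $a,b,c$ with $a \notin \{b,c\}$ (this is the intended reading, since a call involving $a$ may of course alter $a$'s knowledge). First I would record that $\sim_a$ is reflexive, so $\CSequence \sim_a \CSequence$ for every $\CSequence \in \CSequences$. This is immediate from $\sim_a$ being, by Definition \ref{def:model}, an equivalence relation on $\CSequences$; if one prefers it can be verified by a routine induction on the length of $\CSequence$ using the clauses [Base] and [Step].

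Next I would invoke clause (i) of the [Step] rule. Since $a \notin bc$, instantiating that clause with $\CSequenced := \CSequence$ and the call $bc$ yields directly $\CSequence \sim_a \CSequence.bc$ (the rule explicitly provides the conclusion $\CSequence \sim_a \CSequenced.\Call$ in this direction). Because $\sim_a$ is an equivalence relation, symmetry and transitivity then give, for every call sequence $\CSequenced$, that $\CSequenced \sim_a \CSequence$ iff $\CSequenced \sim_a \CSequence.bc$; in other words $\CSequence$ and $\CSequence.bc$ have exactly the same $\sim_a$-equivalence class. The truth clause for $K_a$ in Definition \ref{def:semantics} quantifies precisely over this class: $\CSequence \models K_a \phi$ holds iff $\CSequenced \models \phi$ for all $\CSequenced \sim_a \CSequence$, and symmetrically for $\CSequence.bc$. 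Since the two universal quantifications range over the same set of call sequences, the two conditions coincide, which establishes the main equivalence $\CSequence \models K_a \phi$ iff $\CSequence.bc \models K_a \phi$.

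For the ``consequently'' part, the truth value of a Boolean combination is a function of the truth values of its constituents, so the equivalence propagates to every Boolean combination of formulas of the form $K_a \phi$. To cover an arbitrary guard $\psi$ used in a program for agent $a$, I would additionally note that $\psi \in \la$ is built only from atoms $F_a S$ and $K_a$-formulas; and since $a \notin \{b,c\}$ we have $\CSequence(\init)_a = (\CSequence.bc)(\init)_a$, so each atom $F_a S$ retains its truth value as well. Hence the whole guard is invariant under appending $bc$. I do not expect any real obstacle here: the entire content is the observation that an $a$-irrelevant call fixes the $\sim_a$-class. The only points needing a moment's care are that clause (i) is genuinely available in the direction $\CSequence \sim_a \CSequence.bc$ and that the hypothesis $a \notin \{b,c\}$ is used consistently, both for clause (i) and for the invariance of the $F_a$-atoms.
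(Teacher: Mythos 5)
Your proof is correct and takes essentially the same route as the paper: the paper's entire proof is the one-line observation that $\CSequence \sim_a \CSequence, bc$ holds by definition (clause \emph{(i)} of Definition \ref{def:model}), which is precisely the structural fact you establish via reflexivity plus [Step]\emph{(i)} and then exploit through the coincidence of $\sim_a$-equivalence classes. Your extra care in handling the bare atoms $F_a S$ occurring in guards (via $\CSequence(\init)_a = (\CSequence.bc)(\init)_a$) is a sound elaboration of a point the paper leaves implicit, not a different argument.
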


\begin{proof}
By definition $\CSequence \sim_a \CSequence, bc$, which implies the claim.
\end{proof}
\II

This note states that the calls in which agent $a$ is not involved
have no effect on the truth of the guards used in the programs for
agent $a$. This clarifies the syntax of the guards.  If we allowed in
guards for agent $a$ formulas of the form $F_b C$ as conjuncts, this
natural and desired property would not hold anymore.  Indeed, for
$\CSequence = \epsilon$ we have $\CSequence, bc \models F_b C$, while
$\CSequence \not\models F_b C$.

We also need the following observation that confirms the intuition that
two calls involving different pairs of agents can be executed in an arbitrary order.

\begin{lemma} \label{lem:guard}
Consider a protocol $P$. Let $a, b, c, d$ be four agents such that
for some call sequences $\CSequence$ and $\CSequenced$ we have that
$\CSequence, ab, cd, \CSequenced$ is a computation of $P$. Then also
$\CSequence, cd, ab, \CSequenced$ is a computation of $P$. 
\end{lemma}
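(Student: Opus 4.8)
The plan is to prove the statement in two parts: first that the swapped calls $cd$ and $ab$ are enabled in the new order, and second that the prefixes $\CSequence, ab, cd$ and $\CSequence, cd, ab$ are \emph{interchangeable} — they satisfy exactly the same formulas, and continue to do so after any common continuation — so that the tail $\CSequenced$ can be replayed unchanged and maximality is preserved. I would organize the argument around a single indistinguishability claim, which carries the real content, and treat the enabledness part as a direct corollary of Lemma~\ref{lem:1}.

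For the enabledness of the swapped calls I would invoke Lemma~\ref{lem:1}. Since $a$ is the caller of $ab$, its guard is a guard in the program for $a$ and hence a Boolean combination of $K_a$-formulas; as $cd$ does not involve $a$ (the four agents are distinct), Lemma~\ref{lem:1} gives that this guard has the same truth value after $\CSequence$ and after $\CSequence, cd$, so it holds after $\CSequence, cd$. Symmetrically, the guard of $cd$ is a guard for $c$, and since $ab$ does not involve $c$ it holds after $\CSequence$ exactly as it held after $\CSequence, ab$. Thus $\CSequence, cd$ and then $\CSequence, cd, ab$ are nodes of the computation tree of $P$.

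The heart of the argument is the claim that $\CSequence, ab, cd, \CSequence' \sim_e \CSequence, cd, ab, \CSequence'$ for every agent $e$ and every prefix $\CSequence'$ of $\CSequenced$. I would first treat the base case $\CSequence' = \epsilon$ by a case analysis on $e$: if $e \notin \{a,b,c,d\}$, clause (i) of Definition~\ref{def:model} deletes both $ab$ and $cd$ on each side, reducing both to $\CSequence$; if $e \in \{a,b\}$, say $e = a$, I use clause (i) to insert or delete the $a$-free call $cd$ and clause (ii) to append $ab$, whose view condition $\CSequence.ab(\init)_a = (\CSequence, cd).ab(\init)_a$ holds precisely because $cd$ touches neither $a$ nor $b$; the cases $e \in \{c,d\}$ are symmetric. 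To lift this to arbitrary $\CSequence'$ I would prove an appending step: if $X \sim_e Y$ for all $e$ then (noting that this already forces $X(\init) = Y(\init)$, since each $\sim_e$ preserves $e$'s view) appending a common call $\Call''$ preserves the property — via clause (i) and transitivity when $e \notin \Call''$, and via clause (ii) when $e \in \Call''$, whose view condition is supplied by $X(\init) = Y(\init)$. Induction on the length of $\CSequence'$ then yields the claim, together with $(\CSequence, ab, cd, \CSequence')(\init) = (\CSequence, cd, ab, \CSequence')(\init)$.

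Finally I would record the elementary fact that two sequences which are $\sim_e$-equivalent for all $e$ satisfy exactly the same formulas of $\lang$: the atomic case follows from equality of the induced gossip situations, the $K_e$ case from the two sequences having the same $\sim_e$-class, and the Boolean cases from the induction hypothesis. Consequently every guard is preserved along the swap, so each call of $\CSequenced$ remains enabled in the new order and, at a finite leaf, no guard newly becomes true; hence $\CSequence, cd, ab, \CSequenced$ is a computation of $P$, finite or infinite exactly as the original. I expect the indistinguishability claim to be the main obstacle, and within it the appending step: the invariant ``$\sim_e$ for all $e$'' is not preserved by appending a call that contains $e$ unless the callee's view also matches, and the point that rescues the argument is that disjointness of $ab$ and $cd$ makes the entire gossip situations — not merely $e$'s own view — coincide.
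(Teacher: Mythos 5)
Your proposal is correct and follows essentially the same route as the paper's proof: enabledness of the two swapped calls via Lemma~\ref{lem:1}, followed by the key claim that $\CSequence, ab, cd, \CSequenced' \sim_i \CSequence, cd, ab, \CSequenced'$ for every agent $i$ and every continuation $\CSequenced'$, from which preservation of all guards (and hence of maximality) follows. The only difference is one of rigor: the paper asserts the indistinguishability claim directly ``by the definition of the $\sim_i$ relations,'' whereas you supply the inductive proof — the base-case analysis per agent and the appending step using equality of the full gossip situations — which is exactly the detail the paper leaves implicit.
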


\begin{proof}
  Let $\phi_a$ be the guard that precedes the call $ab$ in the program
  for agent $a$ and $\phi_c$ the guard that precedes the call $cd$ in the
  program for agent $c$.  We have $\CSequence \models \phi_a$ and
  $\CSequence, ab \models \phi_c$.
By Lemma \ref{lem:1}
\[
\mbox{$\CSequence \models \phi_c$ iff $\CSequence, ab \models \phi_c$}
\]
and 
\[
\mbox{$\CSequence, cd \models \phi_a$ iff $\CSequence \models \phi_a$.}
\]
Hence $\CSequence \models \phi_c$ and $\CSequence, cd \models \phi_a$.
This means that $\CSequence, cd, ab$ is a prefix of a computation of
$P$.

Further, by the definition of the $\sim_i$ relations, for all agents
$i$ and all call sequences $\CSequenced'$ we have
$\CSequence, ab, cd, \CSequenced' \sim_i \CSequence, cd, ab,
\CSequenced'$.  Hence for all formulas of the form $K_i \phi$ we have
\[
\mbox{$\CSequence, ab, cd, \CSequenced' \models K_i \phi$ iff 
$\CSequence, cd, ab, \CSequenced' \models K_i \phi$}
\]
and consequently for all guards $\phi$ preceding the calls in $\CSequenced$
we have
\[
\mbox{$\CSequence, ab, cd, \CSequenced \models \phi$ iff 
$\CSequence, cd, ab, \CSequenced \models \phi$.}
\]
This concludes the proof.
\end{proof}
\II

The above observation also holds for infinite computations but we shall not need it in the
sequel.

Finally, we need to reason about the following notion. We call a computation of a gossip protocol \bfe{bad} if some agent is involved in the
first two calls of it. 

\begin{lemma} \label{lem:bad}
  Every gossip protocol admits bad computations.
\end{lemma}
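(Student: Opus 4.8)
The plan is to start from an arbitrary computation of the protocol and to locate in it the \emph{first} call that shares an agent with an earlier one; call it $\Call$, with caller $h$. Writing the computation as $\Call_1, \LL, \Call_{p-1}, \Call, \LL$, minimality of the position of $\Call$ guarantees that the prefix $\Call_1, \LL, \Call_{p-1}$ consists of \emph{pairwise disjoint} calls, so every agent occurs in it at most once. I will then show that $\Call$ together with an earlier call sharing an agent with it can be scheduled as the \emph{first two} calls of some computation, which is therefore bad.

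First I would record two locality facts. (i) If an agent $x$ occurs in no call of $\CSequence$, then $\CSequence \sim_x \epsilon$ (by repeated use of condition (i) of Definition \ref{def:model}), so every guard of $x$ has the same truth value after $\CSequence$ as after $\epsilon$. (ii) If $\CSequence'$ contains neither agent of a call $xy$, then $\CSequence'.xy \sim_x xy$: indeed $\CSequence' \sim_x \epsilon$ by (i), and condition (ii) of Definition \ref{def:model} applies because $x$ is familiar with the same secrets $\{X,Y\}$ after $\CSequence'.xy$ and after $xy$. Together with Lemma \ref{lem:1}, these say that the truth of a guard of $x$ depends only on the calls involving $x$ and on the secrets its partners bring into those calls.

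The key point is that, because $\Call$ is the \emph{first} sharing call, its caller's relevant history is trivial. Suppose $h$ already occurs earlier; then it occurs only in some call $\Call'$, and \emph{both} participants of $\Call'$ are fresh (each occurs only in $\Call'$ within the disjoint prefix). Hence $\Call'$ is enabled after $\epsilon$ (its caller is fresh, so its guard holds by (i)); and the guard of $\Call$ holds after $\Call'$ alone, since by Lemma \ref{lem:1} it is unaffected by the calls lying between $\Call'$ and $\Call$ (none involves $h$), while by (ii) it is unaffected by the part of the prefix preceding $\Call'$ (none involves either agent of $\Call'$). Thus $\Call', \Call$ is a legal two-call prefix whose calls share $h$; extending it to a maximal path yields a bad computation. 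If instead $h$ is fresh, then $\Call$ shares an agent with an earlier call through its callee, $\Call$ itself is already enabled after $\epsilon$ (its caller being fresh), and a symmetric argument schedules $\Call$ and that earlier call as the first two calls.

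Finally I would note that such a sharing call $\Call$ exists in every computation of interest: if all calls of a computation were pairwise disjoint then, for $n \geq 3$, each agent would take part in at most one call and so end familiar with at most two secrets, contradicting that the final gossip situation makes everyone an expert. The main obstacle is precisely the worry that the earlier part of the computation has \emph{enriched} the knowledge of $h$ (or of $h$'s partner in $\Call'$), so that the guard of $\Call$ could fail once $\Call'$ and $\Call$ are pulled to the front; the freshness forced by choosing $\Call$ to be the first sharing call is exactly what rules this out, and is the crux of the argument. Once these enabledness facts are in place, Lemma \ref{lem:guard} may alternatively be invoked to carry out the reordering.
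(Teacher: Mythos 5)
Your overall strategy is sound and genuinely different from the paper's: the paper takes a non-bad computation, inspects its \emph{third} call, and does a four-way case split on whether its caller is $a$, $b$, $c$ or $d$ (a split that is exhaustive only for $n=4$, which is all Theorem \ref{thm:priv3} needs), whereas you locate the \emph{first} call $\Call$ sharing an agent with an earlier call $\Call'$, exploit the pairwise disjointness of everything before it, and pull the pair to the front; this works uniformly for all $n$. Your locality facts (i) and (ii) are correct, and your Case A is correct: $\Call'$ is enabled after $\epsilon$, and the guard of $\Call$ survives deleting the calls after $\Call'$ (Lemma \ref{lem:1}) and those before $\Call'$ (your fact (ii)), so $\Call',\Call$ extends to a bad computation.

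The gap is the ``symmetric argument'' in Case B, where you schedule $\Call$ first and the earlier call $\Call'$ second (this is the only reading under which the enabledness of $\Call$ after $\epsilon$ is the relevant fact). That order can be illegal when the shared agent, the callee $k$ of $\Call$, is the \emph{caller} of $\Call'$: the call $\Call$ changes what $k$ is familiar with, so the guard of $\Call'$, though true after $\epsilon$, need not be true after $\Call$ --- and no analogue of Case A applies, because in Case A the agent whose guard had to survive was never touched by the calls being discarded or inserted. Concretely, with agents $h,k,v,a,b$, let $k$ have the single rule $\neg F_k H \to kv$, let $h$ have $\neg F_h K \to hk$, and let $a$ have $\neg F_a B \to ab$. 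Then $kv, ab, hk$ begins a computation (a Case B instance with $\Call'=kv$, $\Call=hk$), yet $hk, kv$ begins no computation, since after $hk$ the guard $\neg F_k H$ is false. The repair needs no new idea: in Case B keep the \emph{same} order as in Case A. Indeed $\Call'$ is enabled after $\epsilon$ exactly as before, and the guard of $\Call$, true after $\epsilon$ by your fact (i) (its caller $h$ is fresh), remains true after appending $\Call'$ by Lemma \ref{lem:1} since $h \notin \Call'$; both calls involve $k$, so the resulting computation is bad. Two minor further points: your closing suggestion that Lemma \ref{lem:guard} could alternatively carry out the reordering does not quite work, since adjacent swaps of disjoint calls cannot bring $\Call'$ and $\Call$ to the front when \emph{both} agents of $\Call$ occur in distinct earlier calls; and your existence argument for a sharing call tacitly assumes the protocol is correct --- but the paper's own proof makes the same tacit assumption (it needs a third call to exist), and the lemma is only ever applied to correct protocols.
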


\begin{proof}
  Fix a gossip protocol $P$ and consider its computation $\xi$ that is
  not bad. By appropriate renaming we can assume that it begins with
  $ab, cd$.  Let $\Call$ be the third call in $\xi$ and $\psi$ the
  guard associated in $P$ with the call $\Call$.  By assumption we
  have $(ab, cd) \models \psi$.  Four cases arise.  \II

\NI
\emph{Case 1}. Agent $a$ is the caller in $\Call$. 

We have $ab, cd \sim_a ab$, so by Lemma \ref{lem:guard}
\[
\mbox{$(ab, cd) \models \psi$ iff $ab \models \psi$}
\]
and hence $ab \models \psi$. In other words, the call $\Call$ can be
executed in $P$ directly after the call $ab$, that is, $ab, \Call$ is a
prefix of a computation of the protocol $P$.
\II

\NI
\emph{Case 2}. Agent $b$ is the caller in $\Call$. 

We have $ab, cd \sim_b ab$, so, as in Case 1,
$ab \models \psi$. Hence the call $\Call$ can be executed in $P$
directly after the call $ab$, that is, $ab, \Call$ is a prefix of a
computation of the protocol $P$.  
\II

\NI
\emph{Case 3}. Agent $c$ is the caller in $\Call$. 

By Lemma \ref{lem:guard} the calls $ab$ and $cd$ can be reversed in $\xi$,
hence a computation of $P$ exists that begins with $cd, ab, \Call$. So
$(cd, ab) \models \psi$.  We have $cd, ab \sim_c cd$, so, as in Case
1, $cd \models \psi$. Hence the call $\Call$ can be executed in $P$
directly after the call $cd$, that is, $cd, \Call$ is a prefix of a
computation of the protocol $P$.
\II

\NI
\emph{Case 4}. Agent $d$ is the caller in $\Call$. 

As in Case 3 a computation of $P$ exists that begins with
$cd, ab, \Call$.  We have $(cd, ab) \models \psi$ and
$cd, ab \sim_d cd$, so, as in Case 3, $cd \models \psi$. Hence the
call $\Call$ can be executed in $P$ directly after the call $cd$, that
is, $cd, \Call$ is a prefix of a computation of the protocol $P$.
\end{proof}

\begin{theorem} \label{thm:priv3} 
Every correct gossip protocol for 4
  agents generates computations of length $> 4$.
\end{theorem}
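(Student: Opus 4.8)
The plan is to reduce the statement to a purely combinatorial fact about call sequences and then dispatch it by a short counting argument, using Lemma~\ref{lem:bad} as the only input from the logical machinery. Let $P$ be a correct gossip protocol for the four agents $\{a,b,c,d\}$. By Lemma~\ref{lem:bad}, $P$ admits a bad computation $\xi$, i.e.\ one in which some agent occurs in both of the first two calls. Since $P$ is correct, $\xi$ is finite and its leaf satisfies (\ref{cond:expert}), so every agent is an expert in the final gossip situation. By the classical lower bound quoted above ($2n-4 = 4$ for $n=4$), $\xi$ has length at least $4$. Thus it suffices to prove the combinatorial claim: no call sequence of length $4$ that makes every agent an expert can be bad. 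This immediately forces $|\xi| \geq 5 > 4$, which is the assertion.

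For the combinatorial claim I would argue as follows. Suppose $c_1 c_2 c_3 c_4$ is a length-$4$ all-expert sequence whose first two calls $c_1, c_2$ share an agent. Then $c_1, c_2$ involve at most three distinct agents, so some agent --- rename it $d$ --- occurs in neither; after $c_1 c_2$ the only agent familiar with $D$ is $d$, and $d$ is familiar with no other secret. A second preliminary observation is that after two calls no agent is yet an expert: becoming an expert requires participating in a call whose two participants jointly knew all four secrets beforehand, and this cannot happen within the first two calls (in the first call the union of the participants' secrets has size at most $2$, and in the second call at most $3$). Now look at the last call $c_4$: the two agents \emph{not} in $c_4$ are untouched by $c_4$, hence must already be experts after $c_3$; since nobody is an expert after $c_2$, they can only have become experts by participating in $c_3$. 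Hence $c_3$ consists exactly of the two agents absent from $c_4$, so $c_3$ and $c_4$ are complementary disjoint pairs that together exhaust $\{a,b,c,d\}$.

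It remains to derive the contradiction. Because $c_3$ and $c_4$ are disjoint, neither call alters the knowledge of the participants of the other: the knowledge of the $c_4$-pair just before $c_4$ equals their knowledge right after $c_2$, and likewise for the $c_3$-pair before $c_3$. Since both pairs end up fully expert, each pair must already jointly know all four secrets after $c_1 c_2$. But $d$ lies in exactly one of these two pairs, and the other pair consists of agents different from $d$, none of whom is familiar with $D$ after $c_1 c_2$; so that pair cannot jointly know $D$, a contradiction. This also covers the degenerate sub-case in which $c_1$ and $c_2$ are the same unordered pair, where two agents are untouched and the argument applies to either of them. The step I expect to be the crux is the structural observation that $c_3$ and $c_4$ must form a disjoint partition of the agents; once that is in place, isolating the lone holder $d$ of the secret $D$ makes the contradiction immediate. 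Everything else is bookkeeping, and notably all reasoning about guards and knowledge has been absorbed into Lemma~\ref{lem:bad}.
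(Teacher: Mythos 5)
Your proof is correct, and it shares the paper's outer skeleton: both arguments reduce the theorem, via Lemma \ref{lem:bad}, to showing that every \emph{bad} computation has length at least $5$. The inner arguments differ, though. The paper settles that claim with a short counting argument: if agent $a$ is outside the first two calls, then afterwards only $a$ holds the secret $A$, and each subsequent call makes at most one new agent familiar with $A$, so at least three further calls are needed, giving length $\geq 5$ directly and uniformly for all lengths. You instead import the classical lower bound of $2n-4=4$ calls (Tijdeman's result, quoted in the paper) to exclude lengths $\leq 3$, and then rule out length exactly $4$ by a structural analysis: since no agent can be an expert after two calls, the two agents absent from $c_4$ must both participate in $c_3$, so $c_3$ and $c_4$ are complementary disjoint pairs, whence each pair must already jointly hold all four secrets after $c_1c_2$ --- impossible because only the isolated agent $d$ holds $D$ at that point. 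Both proofs pivot on the isolated agent's secret; yours yields a sharper structural picture of hypothetical length-$4$ bad sequences, at the price of invoking an external lower bound and a longer case analysis, whereas the paper's potential-style count is self-contained and handles every case in one stroke.
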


\begin{proof}
Suppose that $\Agents = \{a, b, c, d\}$.  Consider a correct protocol
$P$.

We first show that each bad computation $\xi$ is of length $> 4$.
There are 4 agents, so some agent, say $a$, is involved neither in the
first nor the second call of $\xi$.  So after these two calls none of
the agents $b, c, d$ is familiar with the secret of agent $a$. Each
call increases the number of agents familiar with the secret of agent
$a$ by at most 1. So at least three more calls are needed to obtain a
situation which all agents are familiar with the secret of $a$.  We
conclude that $\xi$ is of length at least 5.

The conclusion now follows by Lemma \ref{lem:bad}.
\end{proof}

Problem \ref{pro:bound} concerned the lower bound $2n-4$.
It is useful to note that the lower bound $2n-3$ can be achieved. So for $n=4$ the precise
bound is 5.

\begin{theorem} \label{thm:2n-3} Suppose that $n \geq 4$.  There
  exists a correct gossip protocol for $n$ agents all computations of which are of
  length $2n-3$.
\end{theorem}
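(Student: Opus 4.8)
The plan is to exhibit a single ``gather-and-broadcast'' protocol on the bidirectional star with centre $a$ and leaves $b_1, \ldots, b_{n-1}$, and to show that every one of its computations consists of $n-1$ gathering calls followed by $n-2$ broadcasting calls. For each leaf $b_i$ I would take the single propositional rule
\[
\neg F_{b_i} A \to b_i a,
\]
so that $b_i$ calls the centre exactly once, namely before it has learned $A$; for the centre $a$ I would take, for each leaf $b_i$, the epistemic rule
\[
\Exp_a \land \neg K_a \Exp_{b_i} \to a b_i.
\]
Here $\Exp_a = \bigwedge_{S \in \mathsf{Sec}} F_a S$ is the guard `$a$ is an expert', while $K_a \Exp_{b_i}$ is a formula of $\la$ (the inner atoms $F_{b_i} S$ sit inside $K_a$, exactly as in the HMS protocol of Example \ref{exa:hms}), so all guards are admissible.

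First I would observe that in any computation every leaf must gather before the centre can act. Since the only calls the protocol can ever make are $b_i a$ and $a b_i$, each leaf $b_i$ participates only in calls with $a$, and hence learns $A$ only through such a call; consequently its guard $\neg F_{b_i} A$ stays true until $b_i$ itself places the call $b_i a$, and becomes permanently false afterwards. The secret $B_i$ can reach $a$ only via the call $b_i a$, so $\Exp_a$ cannot hold until all $n-1$ leaves have called $a$; in particular $a$ can make no broadcast beforehand, and no computation can terminate while some leaf still has its gathering rule enabled. Thus every computation begins with the $n-1$ gathering calls $b_i a$ in some order, after which $a$ and the last gatherer, say $b_m$, are experts while every other leaf holds a strict subset of $\mathsf{Sec}$.

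The crux is then to show that after gathering $a$ broadcasts to precisely the $n-2$ leaves other than $b_m$. For $j \neq m$ the actual call sequence itself witnesses that $b_j$ is not an expert, so $\neg K_a \Exp_{b_j}$ holds and the rule $a b_j$ is enabled; calling $b_j$ makes it an expert and $a$ knows this, disabling the rule, so each such leaf is called exactly once. The delicate point, which I expect to be the main obstacle, is verifying $K_a \Exp_{b_m}$, i.e.\ that $a$ \emph{knows} the last gatherer to be an expert: using Definition \ref{def:model} one argues that every sequence $\CSequenced \sim_a \CSequence$ agrees with $\CSequence$ on the partner and on $a$'s secret set at each of $a$'s calls, so in $\CSequenced$ the last call of $a$ is again with $b_m$ at a moment when $a$ already holds all of $\mathsf{Sec}$; since calls only add secrets and no call of $a$ follows, $b_m$ is an expert in $\CSequenced$ as well. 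Hence $a$ never calls $b_m$, and once the $n-2$ broadcasts are made no guard is enabled. Counting the two phases gives $(n-1) + (n-2) = 2n-3$ calls on every computation, and since after the last broadcast all agents hold $\mathsf{Sec}$, the protocol is partially correct and terminating, hence correct.
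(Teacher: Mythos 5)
Your proposal is correct and follows essentially the same route as the paper: a hub-based gather-then-broadcast protocol whose broadcast rules use the guard $\Exp_a \land \neg K_a \Exp_i$, with the key observation that the last gathering call makes both the hub and its partner experts simultaneously and the hub \emph{knows} it, so only $n-2$ broadcasts (hence $2n-3$ calls in total) are ever made. The only difference is a mirror-image one: in the paper's protocol the hub $a$ also initiates the gathering phase (via rules $\neg F_a I \to a i$, all other programs being empty), whereas you let the leaves initiate it with propositional guards $\neg F_{b_i} A \to b_i a$; your explicit verification of $K_a \Exp_{b_m}$ via the invariants of $\sim_a$ is a sound elaboration of the step the paper states tersely as ``agent $a$ knows it.''
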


\begin{proof}
Assume that $\Agents = \{a_1, \LL, a_n\}$.
We use the observation due to \cite{ADGH14a}
 that the following call sequence results in all agents 
 being experts:
 \[
  \begin{array}{l}
    a_1 a_2, a_1 a_3, \LL, a_1 a_n, \\
    a_1 a_2, a_1 a_3, \LL, a_1 a_{n-1}.
  \end{array}
\]
In this call sequence first agent $a_1$ calls all other agents and
subsequently calls them again except agent $a_{n-1}$. But the exact
order of the calls in each phase is not important. This allows us to
use a gossip protocol with a simple structure.

We choose an arbitrary agent $a \in \Agents$ and
select for it the following program:
\begin{align*}
*[&[]_{i \in \Agents \setminus \{a\}} \neg F_{a} I \to a i \\
    &[]_{i \in \Agents \setminus \{a\}} \Exp_{a} \land \neg K_{a} \Exp_{i} \to a i].
\end{align*}
The programs for the other agents are empty.

Note that after each call the corresponding guard becomes false.
Further, by the form of the above program in every computation of the
protocol first $n-1$ calls of $a$ take place, each to a different
agent. These calls correspond to the first set of guards and are
executed in an arbitrary order.

After the last of these $n-1$ calls, say $a b$, agents $a$
and $b$ both become experts and agent $a$ knows it. So from this
moment on the guard $\Exp_{a} \land \neg K_{a} \Exp_{b}$ is false
and the second call $ab$ does not take place.
Hence the remaining part of the computation consists of $n-2$ calls of
agent $a$, each to a different agent from $\Agents \setminus
\{a,b\}$. These calls correspond to the second set of guards and are
executed in an arbitrary order, as well.  

After these $2n-3$ calls all agents become experts and the protocol terminates.
\end{proof}

\begin{problem} \label{pro:bound2} Prove that the lower bound $2n-3$
  cannot be achieved by a correct propositional gossip protocol.
\end{problem}

\section{Open problems about verification of gossip protocols}
\label{sec:gp2}

We studied in \cite{AKW17} the computational complexity of partial
correctness and termination of gossip protocols and showed
that when in guards only formulas from the sublanguage $\lwn$ are used
both problems are in $\text{coNP}^{\text{NP}}$. This brings us naturally
to the following problems.

\begin{problem} \label{pro:pc}
  What is the exact computational complexity of checking whether a
  given gossip protocol
  \begin{itemize}
  \item is partially correct,
    
  \item terminates?
  \end{itemize}
\end{problem}

We conjecture that both problems are $\text{coNP}^{\text{NP}}$-complete.
This problem can be generalized as follows. In
{\em generalized gossiping} the aim is to reach a gossip situation in
which some given formula $\phi \in \lang$ is true, possibly other than
$\bigwedge_{a \in \Agents} Exp_a$. We say that a a gossip protocol $P$
is $\phi$-\bfe{partially correct} if for all leafs $\CSequence$ of the
computation tree of $P$
\[
  \CSequence \models \phi.
\]

For example the HMS gossip protocol from Example \ref{exa:hms} is
$\bigwedge_{i,j \in \Agents} K_i F_j I$-partially correct, since upon
termination all the guards are false. In contrast, this protocol is
not $\bigwedge_{i,j \in \Agents} K_i\, \Exp_j$-partially correct. Indeed,
assume three agents, $a,b,c$, and the call sequence $(ab, ac, bc)$
after which this gossip protocol terminates. However,
$(ab, ac, bc) \models K_a\, \Exp_b$ does not hold.

\begin{problem}
What is the complexity of checking whether for a given formula $\phi \in \lang$
a given gossip protocol is $\phi$-partially correct?
\end{problem}

These questions can also be asked for parametrized gossip protocols.

\begin{problem}
  What is the complexity of checking whether a given gossip protocol for
  arbitrary connected digraphs
  \begin{itemize}
  \item is partially correct,
   
  \item terminates?
  \end{itemize}
\end{problem}

Next problem concerns relation between two gossip protocols.
To define it we need to discuss the computation trees.

We say that two unordered trees are \bfe{equal} if they become identical after
some possible rearrangement of the children of each node in these
trees.\footnote{This rearrangement is allowed because computational
  trees are unordered.  If we fix an order of children, e.g., subject
  to lexicographic order on the last call, then a computational tree
  for a given protocol is unique and this problem when checking for the equality of two trees
  no longer occurs.}
Given a tree $T$, any removal of branches from $T$ yields a \bfe{subtree} of $T$.

Consider now two gossip protocols $P$ and $P'$.  We say that
$P$ \bfe{can simulate} $P'$ if the computational tree of $P'$ is equal
to some subtree of $P$.  If both $P$ can simulate $P'$ and $P'$ can
simulate $P$ then we say that they are \bfe{bisimilar}.  Clearly the
computational trees of two bisimilar gossip protocols are equal.

\begin{example} \label{exa:lns-hms}
  Consider the LNS and HMS gossip protocols introduced in Examples \ref{exa:lns} and
  \ref{exa:hms}.

Assume that there are only three agents, $a, b, c$.
There are only four maximal call sequences that can be generated by
LNS beginning with the call $ab$, namely:
\[
\mbox{$ab.bc.ac$, $ab.cb.ac$, $ab.ac.bc$
  $ab.ca.bc$.}
\]

For every other starting call $ac$, $ba$, $bc$, $ca$, or
$cb$, there are analogous four such maximal call sequences.  It is
easy to check that all these call sequences can be generated by HMS.
So for three agents HMS can simulate LNS.

However, HMS can also generate the call sequence $ab.bc.ca$ that LNS
cannot, so these two protocols are not bisimilar.
\HB
\end{example}

We actually conjecture that HMS can simulate LNS for any number of
agents. This naturally suggests the following two problems.

\begin{problem}
  What is the exact complexity of checking for two given parametric
  protocols whether
    \begin{itemize}
  \item one simulates the other,
    
  \item they are bisimilar?
  \end{itemize}
\end{problem}

\begin{problem}
  What is the exact computational complexity of checking for two
  gossip protocols $P$ and $P'$ whether
        \begin{itemize}
        \item $P$ simulates $P'$,
          
        \item $P$ and $P'$ are bisimilar?
\end{itemize}
\end{problem}

\NI
\textbf{Comments}

The following result provides some insights into the last problem.

\begin{theorem} \label{thm:simulate}
	Checking whether a gossip protocol $P$ can simulate another protocol $P'$ 
	can be done in exponential time.
\end{theorem}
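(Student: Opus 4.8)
The plan is to decide simulation by explicitly constructing the two computation trees and then running a tree-matching procedure on them. Write $T$ and $T'$ for the computation trees of $P$ and $P'$; to say that $P$ can simulate $P'$ is to say that $T'$ equals some subtree of $T$. Once both trees are available as finite, call-labelled objects (each edge $\CSequence \to \CSequence.\Call$ carrying the label $\Call$), this becomes an instance of rooted, unordered, label-preserving subtree matching, which is solvable in time polynomial in $|T| + |T'|$ by the standard bottom-up dynamic program that, at each pair of nodes, computes a bipartite matching between their children. Hence the entire difficulty is to show that the relevant parts of $T$ and $T'$ are finite and of at most exponential size, and this splits into a bound on the branching of a computation tree and a bound on its depth.

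The branching is immediate: the children of a node $\CSequence$ are the sequences $\CSequence.\Call$ arising from enabled rules, and distinct children require distinct calls $\Call$, of which there are at most $n(n-1) < n^2$. The depth is the crux, and the key lemma I would prove is that every computation of a terminating gossip protocol has length $< n^4$. To prove it, I would call a call in a computation \emph{fresh} if it enlarges the gossip situation (one of the two participants learns a new secret) and \emph{stale} otherwise. Fresh calls are easy: each strictly increases $\sum_{a \in \Agents} |\CSequence(\init)_a|$, a quantity that starts at $n$ and never exceeds $n^2$, so a computation contains fewer than $n^2$ fresh calls. The work is in bounding the stale calls, and there I would adapt the repetition argument behind Claim \ref{cla:1} and Theorem \ref{thm:star}: if an agent $a$ is the caller of a stale call and performing it leaves the enabling guard true, then, commuting the intervening calls out of the way by Lemma \ref{lem:guard}, that call can be repeated indefinitely, contradicting termination.

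The main obstacle is exactly this stale-call bound in the presence of epistemic guards. In the propositional situation of Claim \ref{cla:1} a stale call trivially preserves the truth of its guard, because the guard only reads the caller's secret set, which a stale call leaves unchanged. For a guard $\psi \in \la$ this reasoning fails: executing $ab$ moves $a$ into a strictly finer $\sim_a$-class (it has now made one more call), so $\psi$ may switch from true to false and the naive repetition breaks. The way around it is to track, for each agent $a$, the truth assignment that $\CSequence$ induces on the finitely many guards occurring in $a$'s program, equivalently $a$'s local view up to knowledge, and to show that along any computation this assignment can change only polynomially often; once it has stabilized, a further stale call by $a$ really does leave its guard true, and the repetition argument then applies to force termination to fail. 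Counting the stale calls falling between successive changes of these assignments, together with the fewer than $n^2$ fresh calls, is what yields the overall bound $< n^4$.

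Granting the depth bound, the time analysis is routine. A computation tree explored to depth $n^4$ has at most $\sum_{d \leq n^4} (n^2)^d = 2^{O(n^4 \log n)}$ nodes, each labelled by a call sequence of length $< n^4$; to generate it I evaluate, at every node and for every rule, whether $\CSequence \models \psi$, and since $\psi \in \la \subseteq \lwn$ and $\CSequence$ has polynomial length this test lies in $\NPlog$ and so is decidable in exponential time by the complexity results recalled in Section \ref{sec:open}, whence all the (exponentially many) tests together still cost only exponential time. If $P'$ terminates then $T'$ is finite with depth $< n^4$, and since any subtree of $T$ isomorphic to $T'$ has the same depth, it suffices to truncate $T$ at that depth and run the polynomial subtree match on the two exponential-size trees; this step needs no assumption on $P$. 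Termination of $P'$ is itself read off from the same bounded exploration, a branch of length $n^4$ being, by the lemma, pumpable into an infinite computation; in the remaining non-terminating case the computation trees are finitely generated and eventually periodic, so the comparison would be carried out on their folded, exponential-size representations, again within exponential time. In every case the total running time is exponential, which proves the theorem.
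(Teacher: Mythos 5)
Your proof has two genuine gaps, both at its load-bearing points. First, the entire construction rests on the depth bound ``every computation of a terminating gossip protocol has length $< n^4$,'' and you do not prove it. Your fresh/stale decomposition handles the fresh calls (correctly, with the $n^2$ potential argument, which matches the paper's Lemma \ref{lem:n2}), but for stale calls you yourself identify the obstacle --- a stale call $ab$ can falsify an epistemic guard $\psi \in \la$, so the repetition argument behind Claim \ref{cla:1} does not transfer --- and then resolve it only by asserting that the truth assignment induced on an agent's guards ``can change only polynomially often.'' That assertion is precisely the hard content, and no argument for it is given; as written it is a conjecture, not a lemma. The paper obtains the bound you need (its Corollary \ref{cor:inf}, which there is a \emph{side remark}, not the engine of the proof) by completely different machinery: productive calls, stationary subsequences of long sequences (Lemma \ref{lem:n4}), the resulting existence of an epistemically redundant call in any sequence of length $\geq n^4$ (Corollary \ref{cor:redundant}), and the fact that excising such a call preserves truth of all $\lwn$ formulas (Lemma \ref{lem:epistemic}, imported from earlier work), combined with the Stuttering Lemma to pump an infinite computation. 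None of that machinery appears in your proposal, and without it your depth bound is unsupported.

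Second, the theorem carries no termination hypothesis, and your tree-matching strategy structurally requires finite trees. Your fallback for a non-terminating $P'$ --- that the infinite computation trees are ``eventually periodic'' and can be compared via ``folded, exponential-size representations'' --- is an unjustified claim (guards are history-dependent epistemic formulas, so regularity of the tree is not obvious; establishing it would again need something like Lemma \ref{lem:epistemic}). The paper sidesteps trees and termination entirely with a short-witness lemma (Lemma \ref{lem:short}): if $P$ cannot simulate $P'$, then a \emph{shortest} call sequence generable by $P'$ but not by $P$ has length $\leq n^4$, because otherwise it contains an epistemically redundant call whose removal yields a shorter witness --- Lemma \ref{lem:epistemic} being used twice, once to show the shortened sequence is still generable by $P'$ and once to show it is still not generable by $P$. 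Simulation then reduces, for arbitrary (possibly non-terminating) protocols, to checking the exponentially many sequences of length $\leq n^4$, each check being an $\lwn$ truth evaluation doable in exponential time. Your peripheral estimates (branching $\leq n(n-1)$, the guard-evaluation complexity via $\NPlog$, the count of bounded-length sequences) are fine, but the two central steps are missing.
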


To prove it we need some preparatory results.  We call the second call
$\Call$ in a call sequence
$\CSequence_1.\Call.\CSequence_2.\Call.\CSequence_3$
\bfe{epistemically redundant} if
$\CSequence_1.\Call(\init) =
\CSequence_1.\Call.\CSequence_2.\Call(\init)$.  In \cite{AKW17} we
established the following result (as Lemma 6) showing that removing an
epistemically redundant call does not affect the truth of any formula
from $\lwn$.

\begin{lemma}\label{lem:epistemic}
If $\CSequence_1.\Call.\CSequence_2.\Call.\CSequence_3$ is a call sequence 
where the second call $\Call$ is epistemically redundant, then
for any formula $\psi \in \lwn$: 
\[
  \CSequence_1.\Call.\CSequence_2.\Call.\CSequence_3 \models \psi \text{ iff }
  \CSequence_1.\Call.\CSequence_2.\CSequence_3 \models \psi.
\]
\end{lemma}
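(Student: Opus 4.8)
The plan is to prove both implications at once by peeling off the Boolean structure of $\psi$. Since $\psi \in \lwn$ has no nested modalities, it is a Boolean combination of atomic formulas $F_x S$ and of formulas $K_x \chi$ with $\chi \in \lprop$, so it suffices to establish the stated equivalence for each such conjunct separately. Write the redundant call as $\Call = ab$ and set $\mathbf{p} := \CSequence_1.\Call.\CSequence_2$, so the two sequences to be compared are $\mathbf{p}.\Call.\CSequence_3$ and $\mathbf{p}.\CSequence_3$.

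First I would extract the factual meaning of redundancy. Applying a call only enlarges secret sets, so $\CSequence_1.\Call(\init) \subseteq \mathbf{p}(\init) \subseteq \mathbf{p}.\Call(\init)$ componentwise; the hypothesis $\CSequence_1.\Call(\init)=\mathbf{p}.\Call(\init)$ therefore forces $\CSequence_1.\Call(\init)=\mathbf{p}(\init)=\mathbf{p}.\Call(\init)$, i.e. every call of $\CSequence_2$ and the second $\Call$ leave the gossip situation unchanged. In particular $\mathbf{p}.\Call(\init)=\mathbf{p}(\init)$, and since applying a fixed call sequence is a function of the situation, $\mathbf{p}.\Call.\CSequence_3(\init)=\mathbf{p}.\CSequence_3(\init)$; more is true, namely every agent's secret set agrees at corresponding positions of the two sequences. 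This settles the atomic conjuncts $F_x S$.

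For a conjunct $K_x \chi$ with $x \notin \{a,b\}$ I would prove the stronger statement $\mathbf{p}.\Call.\CSequence_3 \sim_x \mathbf{p}.\CSequence_3$ by induction on $\CSequence_3$, using clause (i) of [Step] in Definition \ref{def:model} when the next call avoids $x$ and clause (ii) when it involves $x$ — the latter being legitimate precisely because the no-op property guarantees the two sequences give $x$ the same secret set after every prefix. Equivalence of $K_x\chi$ is then immediate. The remaining and hardest case is $x=a$ (the case $x=b$ is symmetric). Here the deleted call involves $a$, the two sequences are not $\sim_a$-equivalent, and I would instead compare the reachable situations $T=\{\CSequenced(\init):\CSequenced\sim_a \mathbf{p}.\Call.\CSequence_3\}$ and $T'=\{\CSequenced(\init):\CSequenced\sim_a \mathbf{p}.\CSequence_3\}$; since propositional formulas separate situations, $\CSequence\models K_a\chi \lra \CSequence'\models K_a\chi$ for all $\chi\in\lprop$ will follow once $T=T'$ is shown. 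The structural fact driving this, provable by induction on the generation of $\sim_a$, is that $\sim_a$ preserves the subsequence of $a$-calls together with $a$'s secret set after each of them. Combined with the strong redundancy just derived (so $a$ and $b$ both hold the frozen set $F=\CSequence_1.\Call(\init)_a$ throughout the middle block), this forces the $a$-call matching the second $\Call$ to be a full no-op in \emph{every} $\CSequenced\sim_a \mathbf{p}.\Call.\CSequence_3$: it is again a call $ab$, $a$ is pinned to $F$ before and after it, and $b$, which equals $F$ just after the first $\Call$ and can only grow, must satisfy $b\subseteq F$ there, lest $a$ exceed $F$. Deleting this no-op call yields a sequence in $[\mathbf{p}.\CSequence_3]_{\sim_a}$ with the same final situation, giving $T\subseteq T'$.

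The main obstacle is the reverse inclusion $T'\subseteq T$: given an arbitrary $\CSequenced\sim_a \mathbf{p}.\CSequence_3$, I must reinsert a copy of $\Call$ so as to land in $[\mathbf{p}.\Call.\CSequence_3]_{\sim_a}$ while preserving the final situation. The difficulty is that $\sim_a$ places no constraint on the callee $b$'s secret set, so there is no immediate guarantee that $b$ holds exactly $F$ at the position prescribed by $a$'s view — which is exactly what a situation-preserving no-op insertion at that view-position requires, and naive insertion right after the first $\Call$ produces the wrong ordering of $a$-calls once $\CSequence_2$ itself contains $a$-calls. I expect to resolve this by passing to the view characterization of $\sim_a$ developed in \cite{AW18}, choosing a canonical representative of the class and the insertion point accordingly, and certifying the insertion with the no-op analysis above; this is the step that requires the most care.
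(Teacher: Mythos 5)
The paper itself never proves Lemma \ref{lem:epistemic}; it imports it from \cite{AKW17} (where it is Lemma 6), so your proposal has to stand on its own — and it does not quite get there. What you have is correct as far as it goes: the reduction of $\psi \in \lwn$ to conjuncts $F_x S$ and $K_x\chi$ with $\chi \in \lprop$; the observation that redundancy plus monotonicity of secret sets freezes the whole block $\CSequence_2.\Call$, which settles the atomic case; the induction giving $\CSequence_1.\Call.\CSequence_2.\Call.\CSequence_3 \sim_x \CSequence_1.\Call.\CSequence_2.\CSequence_3$ for $x \notin \{a,b\}$; the reformulation of the case $x \in \{a,b\}$ as $T = T'$ (which is not only sufficient but necessary, since propositional formulas separate gossip situations); and the deletion argument for $T \subseteq T'$, sound once the view characterization of $\sim_a$ from \cite{AW18} is invoked. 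But the reverse inclusion $T' \subseteq T$, which you explicitly defer, is the mathematical core of the lemma, and what you sketch for it (``choose a canonical representative and the insertion point accordingly'') cannot work as stated, because there are classes in which \emph{no} insertion point exists. With agents $\{a,b,c,d\}$ take the redundant sequence $ad.bd.ab.ad.ab$ (so $F = \{A,B,D\}$) and $\CSequenced = ad.bd.ab.bc.ad$, which is $\sim_a$-equivalent to the shortened sequence $ad.bd.ab.ad$. The view of $a$ forces any new copy of $ab$ to be placed after the last $ad$ of $\CSequenced$, but there $b$ already holds $C \notin F$, so the inserted call gives $a$ the set $\{A,B,C,D\} \neq F$ and breaks the view. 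The correct witness is $ad.bd.ab.ad.ab.bc$: the call $bc$ must be \emph{postponed} past the insertion, so in general one needs a reordering argument, not a choice of representative.

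The missing idea can be supplied as follows. Call an agent \emph{dirty} at a moment of $\CSequenced$ if its current set of secrets is not contained in $F$, and call a call dirty if one of its participants is dirty just before it. Work inside the window of $\CSequenced$ between the call matching the first $\Call$ and the call matching the first $a$-call of $\CSequence_3$ (or the end of $\CSequenced$). Every $a$-call in this window is clean, since its partner must hold a subset of $F$, otherwise $a$'s pinned view would be violated; and dirtiness is irreversible because secret sets only grow, so after a dirty call both participants stay dirty forever. Hence a dirty call shares no agent with any later clean call of the window, and all dirty calls can be bubbled past all later clean calls by swaps of adjacent agent-disjoint calls; such swaps preserve every agent's view, the clean/dirty status of each call, and the final gossip situation. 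After this reordering both $a$ and $b$ hold exactly $F$ at the end of the clean block, so $\Call = ab$ can be inserted there as a genuine no-op, producing a sequence whose view equals that of $\CSequence_1.\Call.\CSequence_2.\Call.\CSequence_3$ and whose final situation is that of $\CSequenced$. This closes $T' \subseteq T$; without some such argument your proof is incomplete at precisely the point where the lemma is nontrivial.
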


We say that a call $\Call$ that appears in a call sequence
$\CSequence$ is \bfe{productive} if
$\CSequence_1(\init) \neq \CSequence_1.\Call(\init)$, where for some
call sequences $\CSequence_1$ and $\CSequence_2$,
$\CSequence = \CSequence_1.\Call.\CSequence_2$.

Further, we call a subsequence $\CSequence_2$ of a call sequence
$\CSequence$ \bfe{stationary} if
\[
\CSequence_1.\Call_1(\init) = \CSequence_1.\Call_1.\Call_2(\init) = \dots = \CSequence_1.\CSequence_2(\init),
\]
where $\CSequence = \CSequence_1.\CSequence_2.\CSequence_3$ for some
call sequences $\CSequence_1$ and $\CSequence_3$, and for some
$k \geq 1$, $\CSequence_2 = \Call_1.\Call_2 \LL \Call_k$.

Recall that $n$ is the number of agents.

\begin{lemma}\label{lem:n2}
  Every call sequence contains at most $n^2-n$ productive calls.
\end{lemma}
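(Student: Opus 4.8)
The plan is to track a monotone potential that measures the total amount of knowledge present in the system, and to show that every productive call strictly increases it. For a gossip situation $\Situation = (\Atomsb_a)_{a \in \Agents}$ I define
\[
\Phi(\Situation) := \sum_{a \in \Agents} |\Atomsb_a|,
\]
the total number of pairs $(a, S)$ such that agent $a$ is familiar with secret $S$. Since in $\init$ each agent is familiar with exactly his own secret, $\Phi(\init) = n$; and since each of the $n$ agents can be familiar with at most all $n$ secrets of $\mathsf{Sec}$, we have $\Phi(\Situation) \leq n^2$ for every gossip situation $\Situation$.

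Next I would analyse how a single call changes $\Phi$. Consider a call $ab$ applied to a situation in which $a$ and $b$ are familiar with the sets $\Atomsb_a$ and $\Atomsb_b$. After the call both agents are familiar with $\Atomsb_a \cup \Atomsb_b$, while the other agents are unaffected, so
\[
\Phi(ab(\Situation)) - \Phi(\Situation) = 2|\Atomsb_a \cup \Atomsb_b| - |\Atomsb_a| - |\Atomsb_b| = |\Atomsb_a \setminus \Atomsb_b| + |\Atomsb_b \setminus \Atomsb_a|,
\]
where the last equality follows from inclusion--exclusion. In particular $\Phi$ never decreases along a call sequence. Moreover, by definition a call $ab$ occurring after a prefix $\CSequence_1$ is productive exactly when $\CSequence_1.ab(\init) \neq \CSequence_1(\init)$, which happens precisely when the sets $\Atomsb_a$ and $\Atomsb_b$ in $\CSequence_1(\init)$ differ, i.e. when at least one of $\Atomsb_a \setminus \Atomsb_b$ and $\Atomsb_b \setminus \Atomsb_a$ is non-empty. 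Hence the displayed difference is at least $1$, so every productive call increases $\Phi$ by at least $1$.

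Finally I would put these pieces together by a telescoping count. Writing the given call sequence as $\CSequence = \Call_1, \LL, \Call_m$ and setting $\Situation_i := (\Call_1, \LL, \Call_i)(\init)$, the sequence $\Phi(\Situation_0), \Phi(\Situation_1), \LL, \Phi(\Situation_m)$ is non-decreasing, starts at $\Phi(\init) = n$, and is bounded above by $n^2$; each productive call contributes a strict increase of at least one to this sequence. Therefore the number of productive calls in $\CSequence$ is at most $n^2 - n$, which is the claimed bound.

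I expect no serious obstacle here: the argument is a clean potential/telescoping count, and the bound comes out tight since $\Phi$ ranges over $[n, n^2]$. The only point requiring care is the verification that \emph{productive} --- defined as a change in the whole gossip situation --- coincides with the two agents having held different secret sets, and that this forces a \emph{strict} increase of at least one in $\Phi$ rather than a merely non-strict one; this is exactly what the symmetric-difference computation above supplies.
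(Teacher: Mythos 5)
Your proof is correct and follows essentially the same counting argument as the paper: the total number of (agent, secret) familiarity pairs starts at $n$, is bounded by $n^2$, and strictly increases with every productive call. Your version merely makes explicit, via the symmetric-difference computation, the paper's one-line observation that after each productive call at least one agent learns a new secret.
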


\begin{proof}
  The minimal total number of secrets in a gossip situation is $n$,
  which is achieved in the initial gossip situation $\init$. In turn,
  the maximal total number of secrets is $n^2$, which is achieved in
  the gossip situation in which every agent is an expert.  After each
  productive call at least one agent learns a new secret.  So in a given
  call sequence there can be at most $n^2-n$ productive calls.
\end{proof}

\begin{lemma}\label{lem:n4}
  Every call sequence $\CSequence$ of length $\geq n^4$ has a stationary subsequence
   of length $\geq n^2$.
\end{lemma}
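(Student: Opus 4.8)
The plan is to combine Lemma~\ref{lem:n2} with a pigeonhole argument on the non-productive calls. The first observation is that a maximal block of consecutive non-productive calls is automatically a stationary subsequence. Indeed, write $\CSequence = \CSequence_1.\CSequence_2.\CSequence_3$ where $\CSequence_2 = \Call_1 . \LL . \Call_k$ is such a block, so that each $\Call_i$ leaves the gossip situation unchanged. Then
\[
\CSequence_1.\Call_1(\init) = \CSequence_1.\Call_1.\Call_2(\init) = \dots = \CSequence_1.\CSequence_2(\init),
\]
which is exactly the defining condition for $\CSequence_2$ to be stationary. Hence it suffices to exhibit a run of at least $n^2$ consecutive non-productive calls.

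To do so I would simply count. Let $L$ be the length of $\CSequence$, so $L \geq n^4$. By Lemma~\ref{lem:n2} at most $n^2 - n$ of the calls are productive, hence at least $n^4 - (n^2 - n)$ are non-productive. Viewing the productive calls as separators, they split the non-productive calls into at most $n^2 - n + 1$ maximal runs of consecutive non-productive calls. By the pigeonhole principle one of these runs has length at least
\[
\frac{n^4 - n^2 + n}{n^2 - n + 1}.
\]

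It then remains to check that this quantity is at least $n^2$. Polynomial division gives $n^4 - n^2 + n = (n^2 - n + 1)(n^2 + n) - n^2$, so the fraction equals $n^2 + n - \frac{n^2}{n^2 - n + 1}$. Since $2(n^2-n+1) = n^2 + (n-1)^2 + 1 > n^2$, the subtracted term is smaller than $2$, so the fraction strictly exceeds $n^2 + n - 2 \geq n^2 + 1$ whenever $n \geq 3$. Consequently the longest run of non-productive calls has length $\geq n^2$, and by the opening observation this run is the desired stationary subsequence.

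The argument is short and I do not anticipate a genuine obstacle; the only point demanding care is the bookkeeping in the first step, namely verifying that a maximal non-productive block matches the definition of \emph{stationary} verbatim. In particular its first call may itself be preceded by a productive call, which is harmless precisely because the chain of equalities in the definition begins at $\CSequence_1.\Call_1(\init)$ rather than at $\CSequence_1(\init)$. The standing hypothesis $n \geq 3$ is exactly what is needed to push the final estimate above $n^2$.
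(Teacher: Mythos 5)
Your proof is correct, and it follows a different decomposition than the paper's, though both rest on the same two ingredients. The paper cuts $\CSequence$ into $n^2$ consecutive blocks of length at least $n^2$ each and argues by contradiction: a block that is not stationary must contain a productive call, so if no block were stationary there would be at least $n^2$ productive calls, contradicting Lemma~\ref{lem:n2}. You instead let the productive calls themselves do the cutting: by Lemma~\ref{lem:n2} they number at most $n^2-n$, so they separate the at least $n^4-n^2+n$ non-productive calls into at most $n^2-n+1$ maximal runs, and an averaging argument together with the identity $n^4-n^2+n=(n^2-n+1)(n^2+n)-n^2$ shows that some run has length exceeding $n^2+n-2\geq n^2+1$ (your arithmetic here checks out, and the standing assumption $n\geq 3$ from Section~\ref{sec:logic-recall} justifies the last step). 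The shared core is the observation that a contiguous block of non-productive calls is stationary; the paper uses its contrapositive (non-stationary implies a productive call occurs), you use it directly. The trade-off: the paper's fixed-size partition needs no arithmetic beyond $n^2\cdot n^2=n^4$ and is shorter, whereas your run-based count avoids proof by contradiction and yields a marginally stronger conclusion, namely a stationary run of length at least $n^2+n-1$. Your closing caution about the definition of stationary beginning at $\CSequence_1.\Call_1(\init)$ rather than $\CSequence_1(\init)$ is accurate but not actually needed in your argument, since every call in your run is non-productive and so the chain of equalities holds even from $\CSequence_1(\init)$; that weaker starting point matters only for arguments, like the paper's, that must extract a productive call from a non-stationary block, where the first call of the block is exempt.
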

\begin{proof}
  Split $\CSequence$ into a sequence of $n^2$ subsequences, each of
  length at least $n^2$. Suppose none of these subsequences is
  stationary. Then each of them contains a productive call. So
  $\CSequence$ contains at least $n^2$ productive calls, which
  contradicts Lemma \ref{lem:n2}.
\end{proof}

\begin{corollary} \label{cor:redundant}
  Every call sequence $\CSequence$ of length $\geq n^4$ contains an epistemically
  redundant call.
\end{corollary}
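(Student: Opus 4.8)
The plan is to combine Lemma \ref{lem:n4} with a pigeonhole argument on the finite set of possible calls. First I would invoke Lemma \ref{lem:n4} to extract from $\CSequence$ a stationary subsequence, say $\CSequence_2 = \Call_1 \LL \Call_k$ with $k \geq n^2$, together with a decomposition $\CSequence = \CSequence_1.\CSequence_2.\CSequence_3$. The point of stationarity is that the gossip situation is constant throughout $\CSequence_2$: every prefix $\CSequence_1.\Call_1 \LL \Call_\ell$ with $0 \leq \ell \leq k$ yields one and the same gossip situation when applied to $\init$.

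Next I would count the calls. Since each call $ab$ specifies an ordered pair of distinct agents, there are exactly $n(n-1) = n^2 - n$ distinct possible calls. As the stationary subsequence has length $k \geq n^2 > n^2 - n$, the pigeonhole principle guarantees two indices $i < j$ with $\Call_i = \Call_j$; write $\Call$ for this common call.

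It then remains to exhibit the epistemic redundancy. I would set $\CSequence_1' := \CSequence_1.\Call_1 \LL \Call_{i-1}$, $\CSequence_2' := \Call_{i+1} \LL \Call_{j-1}$, and $\CSequence_3' := \Call_{j+1} \LL \Call_k.\CSequence_3$, so that $\CSequence = \CSequence_1'.\Call.\CSequence_2'.\Call.\CSequence_3'$, the two displayed occurrences of $\Call$ being $\Call_i$ and $\Call_j$. By stationarity, both $\CSequence_1'.\Call(\init)$ (a prefix ending at position $i$) and $\CSequence_1'.\Call.\CSequence_2'.\Call(\init)$ (a prefix ending at position $j$) equal the common constant gossip situation of $\CSequence_2$, hence they coincide; this is exactly the condition for the second occurrence $\Call_j$ to be epistemically redundant.

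I expect no serious obstacle here, since the corollary is essentially a repackaging of Lemma \ref{lem:n4}. The only points requiring care are the bookkeeping of the decomposition around the two occurrences of $\Call$, and the observation that stationarity forces every intermediate gossip situation in $\CSequence_2$ to be equal, so that the two sides of the epistemic-redundancy equation collapse to the same value. The counting step hinges on using $n(n-1)$ (ordered caller--callee pairs) rather than $\binom{n}{2}$, since a call distinguishes the caller from the callee.
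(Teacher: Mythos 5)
Your proposal is correct and follows essentially the same route as the paper's own proof: apply Lemma \ref{lem:n4} to obtain a stationary subsequence of length $\geq n^2$, use the pigeonhole principle over the $n^2-n$ possible calls to find a repeated call, and observe that stationarity makes its second occurrence epistemically redundant. You merely spell out the decomposition bookkeeping that the paper leaves implicit.
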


\begin{proof}
  Take a stationary subsequence $\CSequence_1$ of $\CSequence$
  guaranteed by Lemma \ref{lem:n4}.  There are at most $n^2-n$
  different calls, so some call $\Call$ appears in $\CSequence_1$ at
  least twice. Its second occurrence in $\CSequence_1$ is then
  epistemically redundant in $\CSequence$.
\end{proof}

Next lemma shows that lack of simulation entails existence of a specific call sequence
of a bounded length.
\begin{lemma} \label{lem:short} If $P$ cannot simulate $P'$ then there
  exist a call sequence $\CSequence$ of length $\leq n^4$ that
  can be generated by $P'$, but not by $P$.
\end{lemma}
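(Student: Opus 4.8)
The plan is first to reduce non-simulation to the existence of a (possibly long) separating call sequence, and then to shrink any such sequence below length $n^4$ by repeatedly deleting epistemically redundant calls. For the reduction, observe that since the nodes of a computation tree are call sequences and each edge is labelled by the call appended at that step, a label-preserving isomorphism between the tree of $P'$ and a subtree of the tree of $P$ must send each node to the identical call sequence. Hence $P$ can simulate $P'$ exactly when every node of the tree of $P'$ is also a node of the tree of $P$: if this inclusion holds one prunes the tree of $P$ down to the nodes of $P'$, and conversely a node of $P'$ missing from $P$ blocks any such pruning. Therefore, if $P$ cannot simulate $P'$, some call sequence is generated by $P'$ but not by $P$. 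I would choose such a separating sequence $\CSequence$ of minimal length and write $\CSequence = \CSequence_0.\Calld$. By minimality the proper prefix $\CSequence_0$, which is generated by $P'$, must also be generated by $P$ (else it would be a shorter separator); thus $\Calld$ is enabled after $\CSequence_0$ in $P'$, while in $P$ no rule with call $\Calld$ has a true guard after $\CSequence_0$.

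The core auxiliary step is a deletion lemma: if a call sequence generated by a protocol $Q$ contains an epistemically redundant call, then the sequence obtained by deleting that (second) occurrence is still generated by $Q$. To see this, write the sequence as $\CSequence_1.\Call.\CSequence_2.\Call.\CSequence_3$ with the second $\Call$ redundant. The prefix up to $\CSequence_1.\Call.\CSequence_2$ is unchanged, so its guards still hold; and for every later call, whose guard $\psi$ lies in $\la \subseteq \lwn$, the relevant prefix before and after the deletion differ exactly by removal of the same redundant call, so Lemma \ref{lem:epistemic} guarantees that $\psi$ has the same truth value at both prefixes. Hence every guard needed to reach the shortened sequence in $Q$ remains true, and the shortened sequence is generated by $Q$.

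Now suppose for contradiction that $|\CSequence| > n^4$, so that $|\CSequence_0| \geq n^4$. By Corollary \ref{cor:redundant} the sequence $\CSequence_0$ contains an epistemically redundant call; I delete it to obtain $\CSequence_0^-$ with $|\CSequence_0^-| = |\CSequence_0| - 1$. By the deletion lemma applied to both $P$ and $P'$, the sequence $\CSequence_0^-$ is still generated by $P'$ and by $P$. Moreover Lemma \ref{lem:epistemic}, applied to the guards of $\Calld$ (all in $\lwn$), shows that $\CSequence_0 \models \psi$ iff $\CSequence_0^- \models \psi$ for each such guard $\psi$; consequently $\Calld$ is still enabled after $\CSequence_0^-$ in $P'$ and still has no enabling rule after $\CSequence_0^-$ in $P$. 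Therefore $\CSequence_0^-.\Calld$ is generated by $P'$ but not by $P$ and has length $|\CSequence| - 1$, contradicting the minimality of $\CSequence$. This forces $|\CSequence| \leq n^4$, the desired bound.

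I expect the main obstacle to be the deletion lemma, and specifically the verification that deleting a redundant call in the middle of a sequence simultaneously preserves the enabledness of all subsequent guards. This works only because guards lie in $\lwn$, so that Lemma \ref{lem:epistemic} is applicable at every intermediate prefix; the same fact is exactly what lets the enabledness of the final call $\Calld$ transfer from $\CSequence_0$ to $\CSequence_0^-$ in $P'$ while its non-enabledness transfers in $P$.
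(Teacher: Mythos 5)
Your proof is correct and takes essentially the same route as the paper's: choose a shortest separating call sequence, and if it were longer than $n^4$, use Corollary \ref{cor:redundant} to locate an epistemically redundant call before the final call and Lemma \ref{lem:epistemic} to show its deletion preserves both generation by $P'$ and non-generation by $P$, contradicting minimality. The only differences are presentational --- you spell out the reduction from subtree simulation to node-set inclusion and package the deletion step as a separate lemma, both of which the paper handles implicitly inline.
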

\begin{proof}
  If $P$ cannot simulate $P'$ then take a shortest rooted
  path in the computation tree of $P'$ that does not exist in the
  computation tree of $P$.  This path corresponds to some call
  sequence $\CSequence$.
		
  Suppose by contradiction that the length of $\CSequence$ exceeds $n^4$.
  By Corollary \ref{cor:redundant},
  $\CSequence$ can be partitioned into
  $\CSequence_1.\Call.\CSequence_2.\Call.\CSequence_3.\Call'$ such
  that
  $\CSequence_1.\Call(\init) =
  \CSequence_1.\Call.\CSequence_2.\Call(\init)$.  

  By Lemma \ref{lem:epistemic} for any formula $\phi \in \lwn$ and a subsequence $\CSequence_4$ of $\CSequence_3$
we have
  \begin{equation}
    \label{equ:iff}
    \mbox{$\CSequence_1.\Call.\CSequence_2.\Call.\CSequence_4 \models \phi$
      iff
  $\CSequence_1.\Call.\CSequence_2.\CSequence_4 \models \phi$.}
\end{equation}

By the definition of  $\CSequence$ the call sequence
$\CSequence_1.\Call.\CSequence_2.\Call.\CSequence_3$ can  be generated by $P$, while
the call sequence $\CSequence$ cannot.  So $P$
does not have a rule $\phi \to \Call'$ such that
$\CSequence_1.\Call.\CSequence_2.\Call.\CSequence_3 \models \phi$.

Further, by (\ref{equ:iff}) the call sequence
$\CSequence_1.\Call.\CSequence_2.\CSequence_3$ can be generated by
$P$, while, again by (\ref{equ:iff}), the call sequence
$\CSequence_1.\Call.\CSequence_2.\CSequence_3.\Call'$ (so $\CSequence$
with the second indicated occurence of $\Call$ omitted) cannot.

On the other hand, by assumption $\CSequence$ can be generated by
$P'$, so $P'$ has a rule $\phi' \to \Call'$ such that
$\CSequence_1.\Call.\CSequence_2.\Call.\CSequence_3 \models \phi'$.
Together with (\ref{equ:iff}) this implies that also
$\CSequence_1.\Call.\CSequence_2.\CSequence_3.\Call'$ can be generated
by $P'$. This yields a contradiction with the definition of
$\CSequence$.
\end{proof}
\II

\NI
\emph{Proof of Theorem \ref{thm:simulate}.}  To show that $P$ can
simulate $P'$ it suffices by Lemma \ref{lem:short} to check that each
call sequence of length $\leq n^4$ which can be generated by $P'$ can
also be generated by $P$.

We showed in \cite{AKW17} that checking for a given call sequence
$\CSequence$ and a formula $\phi \in \lwn$ whether
$\CSequence \models \phi$ can be done in exponential time (actually in
$\text{coNP}^\text{NP}$ time).  This implies that checking whether a
given call sequence $\CSequence$ of length $\leq n^4$ can be generated
by a gossip protocol can be done in exponential time.

Consequently, checking whether $P$ can simulate $P'$ can be done in
exponential time, as well, since there are exponentially many call
sequences of length $\leq n^4$.
\HB

\begin{corollary}
	Checking whether gossip protocols $P$ and $P'$ are bisimilar
	can be done in exponential time.
\end{corollary}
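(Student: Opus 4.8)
The plan is to reduce bisimilarity directly to the simulation check established in Theorem \ref{thm:simulate}. Recall that, by definition, $P$ and $P'$ are bisimilar precisely when $P$ can simulate $P'$ and, simultaneously, $P'$ can simulate $P$. So the natural approach is to run the simulation-checking procedure of Theorem \ref{thm:simulate} twice: first on the pair $(P, P')$ to decide whether $P$ can simulate $P'$, and then with the arguments swapped, i.e.\ on the pair $(P', P)$, to decide whether $P'$ can simulate $P$. The protocols $P$ and $P'$ are then declared bisimilar exactly when both checks succeed.

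First I would invoke Theorem \ref{thm:simulate} to conclude that each of these two checks can be carried out in exponential time. Here I would note that both protocols involve the same set $\Agents$ of $n$ agents, so the length bound $n^4$ supplied by Lemma \ref{lem:short} is identical in both directions, and the two runs are genuinely two instances of one and the same procedure. Thus no new combinatorial analysis is needed beyond what Theorem \ref{thm:simulate} already provides; it suffices to apply it symmetrically.

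The only point requiring a word of justification is that performing two exponential-time computations in sequence still yields an exponential-time procedure: if each run is bounded by $2^{p(n)}$ for some polynomial $p$, then their total running time is bounded by $2^{p(n)+1}$, which is again exponential. Consequently the conjunction of the two simulation checks, and with it the bisimilarity test, can be decided in exponential time. I do not expect any genuine obstacle here, as the corollary is an immediate consequence of Theorem \ref{thm:simulate} together with the definition of bisimilarity; all the real content lives in that theorem, and what remains is merely the observation that the class of exponential-time computations is closed under running a bounded number of such procedures one after another.
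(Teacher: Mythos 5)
Your proposal is correct and matches the paper's intended argument: the corollary is stated without proof precisely because it follows immediately from Theorem \ref{thm:simulate} applied in both directions, together with the definition of bisimilarity and the closure of exponential time under running two such checks in sequence. Nothing is missing.
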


As a side remark of independent interest we conclude the paper with
the following consequence of Corollary \ref{cor:redundant}.

\begin{corollary} \label{cor:inf} If a gossip protocol terminates then
  all its computations are of length $< n^4$.
\end{corollary}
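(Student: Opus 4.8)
The plan is to prove the contrapositive: if $P$ admits a computation of length $\geq n^4$, then it admits an infinite computation and so does not terminate. Let $\xi$ be such a computation and let $\CSequence$ be its prefix of length $n^4$. By Corollary \ref{cor:redundant}, $\CSequence$ contains an epistemically redundant call, so a prefix of $\xi$ can be written as $\CSequence_1.\Call.\CSequence_2.\Call$ with $\CSequence_1.\Call(\init) = \CSequence_1.\Call.\CSequence_2.\Call(\init)$; denote this common gossip situation by $G$. Since applying a single call to a gossip situation is idempotent, $\Call(G) = G$, so every further repetition of $\Call$ leaves the situation equal to $G$.

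Next I would fix the guard $\phi$ of the rule $\phi \to \Call$ whose firing produced the second occurrence of $\Call$ in $\xi$; by the syntactic restriction on protocols $\phi \in \la \sse \lwn$, so Lemma \ref{lem:epistemic} applies to it. Because this call was actually performed we have $\CSequence_1.\Call.\CSequence_2 \models \phi$, and since the second $\Call$ is epistemically redundant Lemma \ref{lem:epistemic} gives $\CSequence_1.\Call.\CSequence_2.\Call \models \phi$ as well.

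The core of the argument is to show that $\Call$ can be repeated indefinitely. Put $\mu_k := \CSequence_1.\Call.\CSequence_2.\Call^k$; I would show by induction on $k \geq 1$ that $\mu_k$ is a prefix of a computation of $P$, that $\mu_k(\init) = G$, and that $\mu_k \models \phi$. The base case $k = 1$ is the previous paragraph. For the inductive step, $\mu_k \models \phi$ enables the rule $\phi \to \Call$ after $\mu_k$, so $\mu_{k+1} = \mu_k.\Call$ is again a computation prefix with $\mu_{k+1}(\init) = \Call(G) = G$. Viewing $\mu_{k+1}$ as $\mu_{k-1}.\Call.\Call$ (empty middle segment), its last $\Call$ is epistemically redundant because $\mu_k(\init) = \mu_{k+1}(\init) = G$; hence Lemma \ref{lem:epistemic} yields $\mu_{k+1} \models \phi$ iff $\mu_k \models \phi$, and the latter holds by hypothesis. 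The $\mu_k$ then form an increasing chain of computation prefixes, whose union $\CSequence_1.\Call.\CSequence_2.\Call^\omega$ is an infinite rooted path in the computation tree, i.e.\ an infinite computation of $P$, contradicting termination.

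I expect the delicate point to be exactly this guard-preservation under repetition: it succeeds only because each repeated $\Call$ returns the gossip situation to $G$ and is therefore epistemically redundant, which lets Lemma \ref{lem:epistemic} transport the truth of $\phi$ across the repetition. This is also why membership of the guards in $\lwn$ (guaranteed by the protocol syntax) is essential, since the same pumping would fail for guards mentioning secrets of other agents, for which Lemma \ref{lem:epistemic} does not hold.
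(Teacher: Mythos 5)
Your proof is correct and follows essentially the same route as the paper's: the contrapositive, Corollary \ref{cor:redundant} to locate an epistemically redundant call, Lemma \ref{lem:epistemic} to show the guard remains true after the redundant call, and then pumping that call indefinitely to build an infinite computation. The only cosmetic difference is in the pumping step, where the paper invokes the Stuttering Lemma \ref{lem:stuttering}, while you reuse Lemma \ref{lem:epistemic} together with the observation that a call is idempotent on gossip situations---which, as the paper itself remarks, is precisely the $\lwn$ special case of the Stuttering Lemma.
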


To prove we use the following result from \cite{AW18} that for
the case of formulas from $\lwn$ is actually a special case of Lemma \ref{lem:epistemic}.

\begin{lemma}[Stuttering] \label{lem:stuttering}
Suppose that $\CSequence := \CSequence_1. \Call. \CSequence_2$ and
$\CSequenced := \CSequence_1. \Call. \Call. \CSequence_2$.
Then for all formulas $\phi \in \lang$,
$\CSequence \models \phi$ iff $\CSequenced \models \phi$.
\end{lemma}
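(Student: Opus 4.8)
The plan is to prove a slightly stronger statement by structural induction on $\phi$: for \emph{all} call sequences $\CSequence,\CSequenced$ such that $\CSequenced$ arises from $\CSequence$ by duplicating a single call --- i.e.\ $\CSequence=\CSequence_1.\Call.\CSequence_2$ and $\CSequenced=\CSequence_1.\Call.\Call.\CSequence_2$ for some $\CSequence_1,\CSequence_2$ and a call $\Call$ --- we have $\CSequence\models\phi$ iff $\CSequenced\models\phi$. For $\phi\in\lwn$ this is already immediate from Lemma~\ref{lem:epistemic}: since a call is idempotent on gossip situations ($\Call(\Call(\Situation))=\Call(\Situation)$), the second, adjacent copy of $\Call$ is epistemically redundant, so the genuine content of the lemma is the extension to formulas with nested $K$ modalities. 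The base case $\phi=F_aS$ and the Boolean cases are handled at once by the same idempotency observation, which gives $\CSequence(\init)=\CSequenced(\init)$ as full gossip situations; hence everything reduces to the case $\phi=K_e\psi$.

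For $\phi=K_e\psi$ I would split on whether $e$ participates in $\Call$. If $e\notin\Call$, I first show outright that $\CSequence\sim_e\CSequenced$, by induction on the length of $\CSequence_2$: the base step uses reflexivity together with clause \emph{(i)} of Definition~\ref{def:model} (the extra copy of $\Call$ is invisible to $e$), and in the inductive step, when the appended call involves $e$, the side condition of clause \emph{(ii)} holds for free because $\CSequence_1.\Call.\Call.\CSequence_2'(\init)=\CSequence_1.\Call.\CSequence_2'(\init)$ as full situations. Once $\CSequence\sim_e\CSequenced$ is established, the two sequences have identical $\sim_e$-classes, so $K_e\psi$ takes the same truth value at both, and this sub-case needs no appeal to the induction hypothesis.

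The remaining sub-case, $e\in\Call$ (say $\Call=ef$), is the crux, and here $\CSequence\not\sim_e\CSequenced$, since $e$ observes one call in $\CSequence$ where it observes two in $\CSequenced$. I would argue by a back-and-forth between the two $\sim_e$-classes, relying on the characterization of $\sim_e$ by $e$'s local view --- the sequence of calls in which $e$ takes part, each tagged with the set of secrets $e$ holds right after it --- established in \cite{AW16,AW18}. Given a falsifying witness $\CSequence'\sim_e\CSequence$, I duplicate the $e$-call of $\CSequence'$ that matches the distinguished occurrence of $\Call$; the result $\CSequenced'$ has $e$'s view equal to that of $\CSequenced$, hence $\CSequenced'\sim_e\CSequenced$, and $(\CSequence',\CSequenced')$ is again a single-duplication pair, so the induction hypothesis for $\psi$ yields $\CSequenced'\not\models\psi$. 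Conversely, given $\CSequenced'\sim_e\CSequenced$, equality of views forces $e$'s secret set to coincide after both matching $e$-calls; since agents never lose secrets, this constrains the partner $f$ to hold exactly that same set just before the second matching call, so that call is a \emph{complete no-op} on the gossip situation. Deleting it therefore leaves all subsequent situations --- and hence every agent's view --- unchanged, producing $\CSequence'\sim_e\CSequence$ of which $\CSequenced'$ is a single-duplication image; the induction hypothesis for $\psi$ again transfers non-satisfaction.

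I expect the main obstacle to be exactly this no-op observation and the bookkeeping around ``the matching $e$-call'': it is what makes the deletion in the converse direction well defined (a priori the second copy of $\Call$ could sit arbitrarily far from the first in a member of the class and could have genuine downstream effects on other agents), and it is the point where monotonicity of knowledge and the view-based description of $\sim_e$ must be combined carefully. With these in hand the back-and-forth closes the $K_e$ case, completing the induction and hence the proof.
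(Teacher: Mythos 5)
You should first note that the paper itself contains no proof of this lemma: it is imported from \cite{AW18}, with only the remark---which your opening paragraph independently reproduces---that for $\phi \in \lwn$ it is the special case of Lemma~\ref{lem:epistemic} with empty middle segment $\CSequence_2$. So your attempt must stand on its own. Much of it does: the reduction to $\phi = K_e \psi$ via idempotency of a call on gossip situations is correct; the sub-case $e \notin \Call$, where you derive $\CSequence \sim_e \CSequenced$ outright from clauses \emph{(i)} and \emph{(ii)} of Definition~\ref{def:model} and correctly observe that no induction hypothesis is needed, is sound; the first half of the back-and-forth (duplicating the matching $e$-call in a witness $\CSequence' \sim_e \CSequence$ produces a genuinely \emph{adjacent} duplication, to which the induction hypothesis applies) works; and your no-op observation is right: if $\CSequenced' \sim_e \CSequenced$ with $\Call = ef$, then since $f$ took part in the first matching occurrence, $f$ holds $e$'s post-call set $Q$ from then on, and monotonicity together with the view equality forces $f$'s set just before the second matching occurrence to be exactly $Q$, so that occurrence leaves the entire gossip situation unchanged.

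The gap is the final step of the converse direction. After deleting that no-op occurrence from $\CSequenced'$ you obtain $\CSequence'$ with $\CSequence' \sim_e \CSequence$, but the pair $(\CSequence', \CSequenced')$ is \emph{not} in general a single-duplication pair in the sense of your strengthened statement: the two occurrences of $\Call$ in $\CSequenced'$ may be separated by an arbitrary block of calls not involving $e$. Concretely, for $\CSequenced = ef.ef$ the sequence $\CSequenced' = ef.gh.ef$ satisfies $\CSequenced' \sim_e \CSequenced$, and deletion yields $\CSequence' = ef.gh$, so $\CSequenced'$ has the shape $X.\Call.Y.\Call.Z$ with $Y \neq \epsilon$, not $X.\Call.\Call.Z$. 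Your induction hypothesis quantifies only over adjacent duplications and hence does not transfer $\not\models \psi$ here. Nor is the repair routine: if you enlarge the statement to cover removal of such a separated no-op duplicate, the $K_f$ sub-case for the partner $f$ reopens, because in a $\sim_f$-witness the matching occurrence of $\Call$ is only guaranteed to be a no-op \emph{for $f$}---$e$'s set just before it may be a proper subset of $Q$, so the call can strictly inform $e$, and deleting it then perturbs downstream situations, so the back-and-forth no longer closes. One also cannot simply commute the second $\Call$ back through $Y$ by a Lemma~\ref{lem:guard}-style argument, since $Y$ may contain calls involving $f$ and such swaps change $f$'s view. In short, precisely the point you flag as the ``main obstacle'' is where the proof is genuinely incomplete, and since the paper defers to \cite{AW18} for this lemma, the missing bookkeeping is exactly the nontrivial content being deferred.
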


\NI
\emph{Proof of Corollary \ref{cor:inf}.}  Consider a finite
computation of length $\geq n^4$ and the corresponding call sequence
$\CSequence$. By Corollary \ref{cor:redundant}, $\CSequence$ begins
with a call sequence $\CSequence_1.\Call.\CSequence_2.\Call$, where
$\CSequence_1.\Call(\init) =
\CSequence_1.\Call.\CSequence_2.\Call(\init)$.

  Let $\psi \to \Call$ be the rule used in the considered gossip protocol to
  generate the second occurrence of the call $\Call$.  Then
  $\CSequence_1.\Call.\CSequence_2 \models \psi$ and by Lemma
  \ref{lem:epistemic} for $\CSequence_3 = \epsilon$ we have
  $\CSequence_1.\Call.\CSequence_2.\Call \models \psi$.

  Hence by the repeated use of the Stuttering Lemma \ref{lem:stuttering}
  for $\CSequence_2 = \epsilon$ and all $i \geq 1$,
  $\CSequence_1.\Call.\CSequence_2.\Call^{i} \models \psi$.
  Consequently, after the call sequence
  $\CSequence_1.\Call.\CSequence_2$ is generated, the rule
  $\psi \to \Call$ can be repeatedly applied. Hence
  $\CSequence_1.\Call.\CSequence_2.\Call^{\omega}$ is an infinite
  sequence of calls that corresponds to an infinite computation.
  \HB

\section*{Acknowledgements}

We thank reviewers for useful suggestions concerning the presentation.
The second author was
partially supported by EPSRC grants EP/M027287/1 and
EP/P020909/1. 

\bibliographystyle{eptcs}

\bibliography{new}%

\end{document}